\algnewcommand\algorithmicinput{\textbf{Input:}}
\algnewcommand\Input{\item[\algorithmicinput]}
\algnewcommand\algorithmicoutput{\textbf{Output:}}
\algnewcommand\Output{\item[\algorithmicoutput]}
\theoremstyle{plain}
\newtheorem{theorem}{Theorem}[section]
\newtheorem{lemma}{Lemma}[section]
\newtheorem{proposition}{Proposition}[section]
\newtheorem{corollary}{Corollary}[section]
\numberwithin{equation}{section}
\theoremstyle{definition}
\newtheorem{definition}{Definition}[section]
\newtheorem{example}{Example}[section]
\theoremstyle{remark}
\DeclareMathOperator{\trace}{Tr}
\DeclareMathOperator{\rank}{rank}
\DeclareMathOperator{\diag}{diag}
\newcommand{\norm}[1]{\lVert#1\rVert}
\begin{document}

% If your paper is accepted and the title of your paper is very long,
% the style will print as headings an error message. Use the following
% command to supply a shorter title of your paper so that it can be
% used as headings.
%
%\runningtitle{I use this title instead because the last one was very long}

% If your paper is accepted and the number of authors is large, the
% style will print as headings an error message. Use the following
% command to supply a shorter version of the authors names so that
% they can be used as headings (for example, use only the surnames)
%
%\runningauthor{Surname 1, Surname 2, Surname 3, ...., Surname n}

\twocolumn[
\aistatstitle{Scalable Spectral Clustering with Group Fairness Constraints}
\aistatsauthor{
Ji Wang \And Ding Lu \And  Ian Davidson \And Zhaojun Bai }
\aistatsaddress{
Univ. of California, Davis \\
{\tt jiiwang@ucdavis.edu}
\And   
University of Kentucky \\
{\tt Ding.Lu@uky.edu}
\And 
Univ. of California, Davis \\
{\tt indavidson@ucdavis.edu} 
\And 
Univ. of California, Davis \\
{\tt zbai@ucdavis.edu}
} 
]

\begin{abstract}
  There are synergies of research interests
and industrial efforts in modeling fairness and 
correcting algorithmic bias 
in machine learning. 
In this paper, we present a scalable algorithm for 
spectral clustering (SC) with group fairness constraints.
Group fairness is also known as statistical parity where in each cluster, 
each protected group is represented with 
the same proportion as in the entirety.
While FairSC algorithm \citep{kleindessner2019guarantees}
is able to find the fairer clustering, 
it is compromised by high computational costs due to the algorithm's kernels 
of computing nullspaces and the square roots of dense matrices
explicitly. We present a new formulation of the underlying spectral 
computation of FairSC by incorporating nullspace projection 
and Hotelling's deflation such that the resulting algorithm, 
called s-FairSC, only involves the sparse matrix-vector products 
and is able to fully exploit the sparsity of the fair SC model.
The experimental results on the modified stochastic block model 
demonstrate that while it is comparable with FairSC in recovering 
fair clustering, s-FairSC is 12$\times$ faster than FairSC for moderate model sizes.  s-FairSC is further demonstrated to be  
scalable in the sense that the computational costs of s-FairSC only increase marginally compared to the SC without fairness constraints.
\end{abstract}

\section{INTRODUCTION}

Machine learning (ML) is widely used to automate decisions 
in areas such as targeting of advertising, issuing 
of credit cards, and admission of students.
While powerful, ML is vulnerable to 
biases encoded in the raw data or brought by underlying algorithms 
against certain groups or individuals,
thus resulting in {\it unfair}  
decisions~\citep{hardt2016equality,chouldechova2018frontiers}.  
Examples of algorithmic unfairness in real life are documented in \cite{flores2016false, pethig2022biased}.
In the context of algorithmic decision-making, 
{\it fairness} commonly refers to the prohibition of any 
favoritism toward certain groups or individuals based on their 
natural or acquired characteristics. Such characteristics are also 
known as sensitive attributes, for instance, gender, ethnicity, 
sexual orientation, and age group~\citep{mehrabi2021survey}. 
The rising stake and growing societal impact of ML algorithms have 
motivated the study of fairness in academia and industry. 
Various efforts have been attempted at modeling fairness and 
correcting algorithmic biases in both supervised and unsupervised 
ML, see e.g., \cite{dwork2012fairness,chierichetti2017fair,samadi2018price,agarwal2019fair,aghaei2019learning,amini2019uncovering,zhang2019framework, davidson2020making}.

There is a wide range of studies in fair ML  
depending on the choice of algorithms and fairness definitions.
In this paper, we focus on spectral clustering (SC) with group 
fairness constraints.  The notion of group fairness is an idea 
of statistical parity by ~\cite{feldman2015certifying,zemel2013learning}. 
It ensures that the proportion of members in a group 
receiving positive (negative) consideration is identical to 
the proportion of the population as a whole.
In \cite{kleindessner2019guarantees},  
a mathematical model is proposed to incorporate the group fairness 
into the SC framework, FairSC for short.
For synthetic networks,  FairSC is shown to recover ground-truth clustering with high probability. For real-life datasets, 
FairSC identifies a fairer clustering compared to SC without fairness constraints.
Unfortunately, FairSC can only handle moderate model sizes 
due to the computational costs in the computations
of orthonormal bases of large nullspaces and the square roots 
of dense matrices. FairSC is not scalable.

In this paper, we present a new formulation of spectral computation of FairSC 
by incorporating nullspace projection and Hotelling's deflation. 
The resulting algorithm is named Scalable FairSC, or s-FairSC. 
In s-FairSC, all computational kernels only involve the sparse 
matrix-vector multiplications and therefore are capable of fully
exploiting the sparsity of the fair SC model.
A comparison of s-FairSC with FairSC on the modified stochastic block model exhibits 12x speed up for moderate model sizes. 
Meanwhile, s-FairSC is comparable with
FairSC in recovering fair clustering.
The s-FairSC is further demonstrated to be
scalable in the sense that it only has a marginal increase in
computational costs compared to the SC without fairness constraints.

The remainder of the paper is organized as follows.
Section~\ref{sec:fairSC} reviews the basics of spectral clustering and group fairness. 
Section~\ref{sec:algs} first recaps FairSC and then derives s-FairSC. 
Section~\ref{sec:expr} starts with the descriptions of experimental datasets and then demonstrates the improvements of s-FairSC 
in computational efficiency and scalability while maintaining 
the same accuracy as FairSC. Concluding remarks are in Section~\ref{sec:concl}.

\section{SC AND FAIR SC}\label{sec:fairSC}

\subsection{Clustering and fair clustering} 
Given a set of data, the goal of clustering is to partition 
the set into subsets such that data in the same subset 
is more similar to each other than in those of the other subsets. 
Mathematically, let $\mathcal{G}(V,W)$ denote a weighted and 
undirected graph with a set of vertices (data) 
$V=\{v_1,v_2,\dots,v_n\}$ and a {\it weighted adjacency matrix}
$W = (w_{ij}) \in \mathbb{R}^{n \times n}$.
The matrix $W$ encodes the edge information.
We assume $w_{ij}\geq 0$ and $w_{ii}=0$.
If $w_{ij}>0$, then $(v_i,v_j)$ is an edge with weight $w_{ij}$. 
We denote by $d_i = \sum_{j = 1}^{n} w_{ij}$
the {\it degree} of a vertex $v_i$ and 
$D=\mbox{diag}(d_1,d_2,\dots,d_n)$,
the {\it degree matrix} of $\mathcal{G}(V,W)$.
For simplicity, we assume there is no isolated vertex, and consequently, 
$D$ is positive definite.

The task of clustering is to partition $V$ into $k$ 
disjoint subsets ({\it clusters}):
\begin{equation} \label{eq:clusteringdef} 
V = C_1 \cup \cdots \cup C_k,
\end{equation} 
such that
the total weights within each subset are large and 
between two different subsets are small.
The clustering~\eqref{eq:clusteringdef} can be encoded in a {\it clustering indicator matrix} 
$H = (h_{i\ell}) \in \mathbb R^{n \times k}$, where for $i=1,\ldots,n$ and $\ell = 1,\ldots, k$, 
\begin{equation} \label{eq:H}
	h_{i\ell} :=  \left\{
                \begin{array}{ll}
                  1, & \mbox{if } v_i \in C_{\ell}, \\
                  0, & \mbox{ otherwise.}
                \end{array}
              \right.
\end{equation}  
Now let us consider how to enforce {\it group fairness} in 
clustering.  The {\it groups} refer to a partition of 
the collected data $V$
(e.g., based on sensitive attributes such as gender and race).
We denote the groups with $h$ non-empty subsets:
\begin{equation} \label{eq:groupdef}
V = V_{1}\cup \cdots\cup V_{h},
\end{equation} 
where $V_i \cap V_j=\emptyset$ for $i\neq j$.
Groups can be encoded in a {\it group indicator matrix} 
$G = (g_{is}) \in \mathbb R^{n \times h}$, where for $i=1,\dots,n$ and $s=1,\dots, h$
\begin{equation} \label{eq:group-vec}
	g_{is}
	:=  \left\{
        \begin{array}{ll}
          1, & \mbox{if }v_i \in V_{s}, \\
          0, & \mbox{ otherwise.}
        \end{array}
      \right.
\end{equation}
The {\it group fairness} for clustering refers to the case that 
objects from all groups are presented proportionately in each cluster,
also known as statistical parity.
The following definition is due to~\cite{kleindessner2019guarantees},
which extends the notion of group fairness 
by~\cite{chierichetti2017fair}.  

\begin{definition} \label{def:fair}
A clustering~\eqref{eq:clusteringdef}
is {\it group fair} with respect to 
a group partition~\eqref{eq:groupdef}
if in each cluster the objects from each group are presented proportionately 
as in the original dataset. 
That is, for $s = 1, 2, \cdots, h$ and $\ell = 1, 2, \cdots, k$, 
\begin{equation} \label{eq:fair}
\frac{|V_{s}\cap C_{\ell}|}{|C_{\ell}|} 
= \frac{|V_{s}|}{|V|},
\end{equation} 
where $|V|$ denotes the number of vertices in $V$.
\end{definition}

The fairness condition~\eqref{eq:fair} can be represented 
compactly using the matrices $H$ in \eqref{eq:H}
and $G$ in \eqref{eq:group-vec}. To do so,
let us first introduce matrices 
\begin{equation}\label{eq:mz}
M := G^T H
\quad\text{and}\quad
Z := (G^T {\bf 1}_n)\cdot (H^T{\bf 1}_n )^T, 
\end{equation}
where ${\bf 1}_n$ is a length-$n$ column
vector with all elements equal to 1. 
Then the entries of $M$ and $Z$ are 
$m_{s\ell} = |V_s\cap C_\ell|$  
and $z_{s\ell} = |V_s|\cdot |C_\ell|$
for $s=1,2,\dots, h$ and $\ell = 1,2,\dots, k$.
Consequently, the fairness condition~\eqref{eq:fair} is equivalent to 
\begin{equation}\label{eq:nmz}
	n\cdot M = Z,
\end{equation}
where $n = |V|$.
By~\eqref{eq:mz}, equation~\eqref{eq:nmz} holds if and only if 
\begin{equation}\label{eq:f0h}
F_0^T H = 0,
\end{equation}
where 
$F_0 := G - {\bf 1}_n z^T \in\mathbb R^{n\times h}$
and $z:=(G^T{\bf 1}_n)/n \in\mathbb R^{h}$.
Observe that according to the definition of $G$ in~\eqref{eq:group-vec}, 
the entries of the vector $z$ satisfy
$z_i = |V_i|/n$, for $i=1,2,\dots, h$.

The following lemma 
shows that it is sufficient to use the first $h-1$
columns of $F_0$ in the constraint~\eqref{eq:f0h}. 
The idea of using the first $h-1$ columns of $F_0$
is from~\cite{kleindessner2019guarantees}.
Extended from this idea, we justify the choice of $h-1$ through the rank of $F_0$ and prove that $h-1$ is indeed the least number of columns necessary.

\begin{lemma}\label{le:1}
Let $H$ be the clustering indicator matrix in~\eqref{eq:H},
and $G$ be the group indicator matrix as in~\eqref{eq:group-vec}.
Let $F_0\in \mathbb R^{n\times h}$ be as defined in~\eqref{eq:f0h} and 
$F:=F_0(:,1:h-1)\in \mathbb R^{n\times (h-1)}$ consists of the first
$h-1$ columns\footnote{By changing the order of groups $\{V_s\}$, 
the result also holds for $F$ with arbitrary $(h-1)$ columns of $F_0$.}
of $F_0$.
Then, 
\begin{enumerate}[(i)]
    \item \label{i:le:1:1}
    $\rank{(F_0)} = \rank{(F)} = h-1$.
    
    \item \label{i:le:1:2}
    The clustering~\eqref{eq:clusteringdef} is group fair 
		with respect to~\eqref{eq:groupdef}
		if and only if
		\begin{equation} \label{eq:fairmath}
		F^T H = 0.
		\end{equation}
\end{enumerate} 
\end{lemma}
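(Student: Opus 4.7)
The plan is to handle parts (i) and (ii) together by first establishing a single linear dependence among the columns of $F_0$, namely $F_0\mathbf{1}_h = 0$, and then leveraging this one identity throughout.

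For part (i), I would begin with the key observation that the rows of $G$ each have exactly one nonzero entry (equal to $1$) because $V = V_1\cup\cdots\cup V_h$ is a partition; equivalently $G\mathbf{1}_h = \mathbf{1}_n$. Combined with $\mathbf{1}_h^T z = (\mathbf{1}_h^T G^T \mathbf{1}_n)/n = n/n = 1$, this gives $F_0 \mathbf{1}_h = G\mathbf{1}_h - \mathbf{1}_n(z^T\mathbf{1}_h) = 0$, so $\rank(F_0)\le h-1$. To get the matching lower bound I would show that the first $h-1$ columns of $F_0$ are already linearly independent: suppose $\sum_{s=1}^{h-1} c_s F_0(:,s) = 0$, i.e., $\sum_{s=1}^{h-1} c_s G(:,s) = \alpha\mathbf{1}_n$ with $\alpha=\sum_{s=1}^{h-1} c_s z_s$. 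Reading off the entries indexed by vertices in $V_h$ forces $\alpha = 0$, and then reading off entries indexed by $V_s$ for each $s\le h-1$ forces $c_s = 0$. Hence $\rank(F)\ge h-1$, and since $F$ is a column submatrix of $F_0$ we also have $\rank(F)\le\rank(F_0)\le h-1$; altogether $\rank(F_0)=\rank(F)=h-1$.

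For part (ii), the excerpt has already reduced group fairness to the condition $F_0^T H = 0$, so it suffices to prove $F_0^T H = 0 \Leftrightarrow F^T H = 0$. The forward direction is immediate since $F$ consists of the first $h-1$ columns of $F_0$. For the converse, I would use the relation established in part (i): because $F_0\mathbf{1}_h = 0$, the last column satisfies $F_0(:,h) = -\sum_{s=1}^{h-1} F_0(:,s)$, and therefore $F_0(:,h)^T H = -\sum_{s=1}^{h-1} F_0(:,s)^T H = 0$ whenever $F^T H = 0$. Combining with $F^T H = 0$ itself yields $F_0^T H = 0$.

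There is no serious obstacle here; the entire argument hinges on the single identity $F_0\mathbf{1}_h = 0$, which in turn rests on the partition property $G\mathbf{1}_h=\mathbf{1}_n$. The only step requiring a little care is the linear-independence argument for the first $h-1$ columns of $F_0$, where one must remember to read off the coordinates corresponding to vertices in $V_h$ first in order to conclude $\alpha=0$; this is what genuinely uses the hypothesis that $V_h$ is nonempty (as assumed in~\eqref{eq:groupdef}). The optional footnote about reordering groups is then automatic: relabeling simply permutes the columns of $G$ and of $F_0$, and the above argument applies with any chosen index playing the role of ``$h$''.
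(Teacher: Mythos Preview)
Your proof is correct and shares the same overall structure as the paper's---both hinge on the identity $F_0\mathbf{1}_h=0$ derived from $G\mathbf{1}_h=\mathbf{1}_n$, and both reduce (ii) to the observation that $F$ and $F_0$ have the same column space. The one genuine difference is in how the rank lower bound in (i) is obtained. The paper argues via the nullspace: it shows that any $x$ with $F_0x=0$ must satisfy $Gx=\alpha\mathbf{1}_n$, then multiplies by the pseudo-inverse $G^{\dagger}=(G^TG)^{-1}G^T$ (using that the columns of $G$ are orthogonal) to conclude $x=\alpha\mathbf{1}_h$, so $\dim\ker F_0=1$ and rank-nullity gives $\rank(F_0)=h-1$; only then does it deduce $\rank(F)=\rank(F_0)$. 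You instead prove directly that the first $h-1$ columns of $F_0$ are linearly independent by reading off coordinates indexed by $V_h$ (forcing $\alpha=0$) and then by each $V_s$ (forcing $c_s=0$). Your route is more elementary---it avoids the pseudo-inverse entirely---and it yields $\rank(F)=h-1$ in one step rather than passing through $\rank(F_0)$ first; the paper's route has the mild advantage of characterizing the full nullspace of $F_0$ explicitly, though that extra information is not used elsewhere.
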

\begin{proof}
See Appendix~\ref{appx-le2-1}.
\end{proof}

\subsection{SC and fair SC}

\paragraph{SC.}
The objective function of a normalized cut (NCut) \citep{shi2000normalized,ng2001spectral}
 is
\begin{equation} \label{eq:ncut}
    \mbox{NCut}(C_1, \cdots, C_k) := \sum_{\ell=1}^{k}\frac{\mbox{Cut}(C_{\ell}, V\backslash C_{\ell})}{\mbox{vol}(C_{\ell})},
\end{equation}
where
\[
\mbox{Cut}(C_\ell, V\backslash C_\ell) 
= \sum_{
\substack{v_i \in C_\ell \\
v_j \in V\backslash C_\ell}
} w_{ij}, \,\, 
\mbox{vol}(C_\ell) = \sum_{v_i \in C_\ell}d_i.
\]
The NCut function calculates the scaled total weights of {\it between-cluster}
edges, and  measures the similarities between the clusters:
a smaller NCut value implies better clustering.
The scaling in~\eqref{eq:ncut} by \mbox{vol}$(C_{\ell})$ takes into account the size of the cluster to avoid outliers. 
Hence, the goal is to minimize NCut.

The NCut function~\eqref{eq:ncut} admits a nice expression 
using the clustering indicator matrix $H$. 
Let us first scale the clustering indicator matrix
$H$ in \eqref{eq:H} to 
\begin{equation} \label{eq:normal-H}
H \leftarrow H \widehat D^{-1},
\end{equation}
where $\widehat{D} = \diag{(\sqrt{\mbox{vol}(C_1)}, \cdots,
\sqrt{\mbox{vol}(C_k)})}$.
We call the new $H$ the {\it scaled indicator matrix}.
For convenience, we use the 
same notation for both scaled and unscaled cluster indicator matrices.
Then, the NCut function \eqref{eq:ncut} is recast to the following matrix trace:
\begin{equation}\label{eq:trace}
\mbox{NCut}(C_{1}, \cdots, C_{k}) = \trace{(H^{T}LH)},
\end{equation}
where $L = D - W$ is the {\it Laplacian} of $\mathcal{G}(V,W)$.
Note that under the assumption of connectivity of $\mathcal{G}(V,W)$, 
$L$ is semi-positive definite 
and has exactly one zero eigenvalue.
By~\eqref{eq:trace}, the NCut minimization is equivalent to the trace minimization problem
\begin{equation} \label{eq:normal-prob}
	\min{\trace{(H^{T}LH)}} 
	\quad \mbox{s.t.} \quad \mbox{$H$ is of the form~\eqref{eq:normal-H}}.
\end{equation}
Solving problem~\eqref{eq:normal-prob} directly is
NP-hard~\citep{wagner1993between}.
In practice, the following relaxed version of the problem~\eqref{eq:normal-prob}
is solved:
\begin{equation} \label{eq:normal-prob1}
	\min_{H \in \mathbb{R}^{n \times k}}{\trace{(H^{T}LH)}} 
	\quad \mbox{s.t.} \quad H^{T}DH = I_{k}.
\end{equation}
Once an optimal solution $H$ of \eqref{eq:normal-prob1} is obtained, 
a discrete solution of~\eqref{eq:normal-prob} 
can be obtained by a properly chosen criterion.
Subsequently, the $k$-means algorithm (for the rows of $H$) is applied for clustering, 
although other techniques are also available; see, e.g.,~\cite{bach2003learning, lang2005fixing}.

Problem~\eqref{eq:normal-prob1} is a classical trace minimization problem initially studied in~\cite{fan1949theorem}.
The following theorem can be found in~\cite[p.~248]{horn2012matrix}.

\begin{theorem}  \label{th:mintr}
For a symmetric matrix $A\in \mathbb{R}^{n \times n}$, 
\begin{equation*}
\min_{X^{T}X = I_{k}}\trace{(X^{T}AX)} =
\trace{(X_*^{T}AX_*)} =  \sum_{i=1}^{k}\lambda_i,
\end{equation*}
where
$\lambda_1 \leq \cdots \leq \lambda_k$ are the $k$ smallest eigenvalues of $A$,
and columns of $X_* \in \mathbb{R}^{n \times k}$ are the corresponding eigenvectors.
\end{theorem}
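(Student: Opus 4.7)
The plan is to reduce the constrained trace minimization to a linear program over the row-norms of a semi-orthogonal matrix, using the spectral decomposition of $A$.

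First, I would diagonalize: since $A$ is symmetric, write $A=Q\Lambda Q^{T}$ with $Q\in\mathbb{R}^{n\times n}$ orthogonal and $\Lambda=\diag(\lambda_{1},\dots,\lambda_{n})$ where $\lambda_{1}\le\cdots\le\lambda_{n}$. Substitute $Y=Q^{T}X\in\mathbb{R}^{n\times k}$; the constraint $X^{T}X=I_{k}$ is equivalent to $Y^{T}Y=I_{k}$, and the objective becomes
\begin{equation*}
\trace(X^{T}AX)=\trace(Y^{T}\Lambda Y)=\sum_{i=1}^{n}\lambda_{i}\,r_{i},
\end{equation*}
where $r_{i}:=\sum_{j=1}^{k}y_{ij}^{2}$ is the squared norm of the $i$-th row of $Y$. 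So the problem becomes: minimize $\sum_{i}\lambda_{i}r_{i}$ over the vectors $(r_{1},\dots,r_{n})$ realizable by some $Y$ with orthonormal columns.

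Second, I would extract the two constraints that characterize the feasible $(r_{i})$. The identity $\sum_{i}r_{i}=\trace(Y^{T}Y)=\trace(I_{k})=k$ is immediate. For $0\le r_{i}\le 1$, I would argue that since $Y$ has orthonormal columns, its $k$ columns can be extended to an orthonormal basis of $\mathbb{R}^{n}$, yielding an orthogonal matrix $\widetilde{Y}\in\mathbb{R}^{n\times n}$ whose every row has unit Euclidean norm; $r_{i}$ is the partial sum of squares of the first $k$ entries of row $i$ of $\widetilde{Y}$, hence $r_{i}\le 1$.

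Third, I would solve the resulting LP: minimize $\sum_{i=1}^{n}\lambda_{i}r_{i}$ subject to $0\le r_{i}\le 1$ and $\sum_{i}r_{i}=k$. Because the coefficients are ordered $\lambda_{1}\le\cdots\le\lambda_{n}$, a standard exchange argument (swap any weight on index $i>k$ with unused capacity on some $j\le k$) shows the optimum is $\sum_{i=1}^{k}\lambda_{i}$, attained at $r_{1}=\cdots=r_{k}=1$, $r_{k+1}=\cdots=r_{n}=0$. Finally, I would exhibit achievability by choosing $X_{*}=Q(:,1:k)$, so that $Y_{*}=Q^{T}X_{*}=\bigl[\begin{smallmatrix}I_{k}\\0\end{smallmatrix}\bigr]$ realizes exactly this optimal $(r_{i})$, giving $\trace(X_{*}^{T}AX_{*})=\sum_{i=1}^{k}\lambda_{i}$.

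The main obstacle is the bound $r_{i}\le 1$, which is the nontrivial geometric content; once that is in hand the LP step is a short convexity/exchange argument and the rest is mechanical. A mild subtlety worth noting is non-uniqueness of $X_{*}$ when $\lambda_{k}=\lambda_{k+1}$, but since the theorem only asserts the value of the minimum and the existence of a minimizing $X_{*}$ built from eigenvectors, this does not affect the proof.
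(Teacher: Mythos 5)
Your proof is correct and complete. Note that the paper does not prove Theorem~2.1 at all: it is the classical Ky Fan trace-minimization principle, and the authors simply cite Horn and Johnson (p.~248), where the standard route goes through the Poincar\'e separation (Cauchy interlacing) theorem --- the eigenvalues of the compression $X^{T}AX$ dominate the corresponding smallest eigenvalues of $A$, so $\trace(X^{T}AX)=\sum_{i=1}^{k}\lambda_{i}(X^{T}AX)\ge\sum_{i=1}^{k}\lambda_{i}(A)$. Your argument is a genuinely different and more elementary path: after diagonalizing, you observe that $\trace(Y^{T}\Lambda Y)=\sum_{i}\lambda_{i}r_{i}$ with $r_{i}=(YY^{T})_{ii}$, establish the feasibility constraints $\sum_{i}r_{i}=k$ and $0\le r_{i}\le 1$ (your basis-extension argument is fine; equivalently, $YY^{T}$ is an orthogonal projector, so its diagonal entries lie in $[0,1]$), and finish with a linear-programming exchange argument plus an explicit minimizer $X_{*}=Q(:,1:k)$. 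What the interlacing route buys is immediate access to the stronger statement that each individual eigenvalue of $X^{T}AX$ is bounded below; what your route buys is self-containedness --- it needs only the spectral theorem and a one-line LP, and it makes transparent exactly where the value $\sum_{i=1}^{k}\lambda_{i}$ comes from. Your closing remark about non-uniqueness when $\lambda_{k}=\lambda_{k+1}$ is apt and does not affect the claim. No gaps.
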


By Theorem~\ref{th:mintr}, we can reformulate
problem~\eqref{eq:normal-prob1}
to the standard trace minimization problem by 
a change of variables $X=D^{1/2} H$:
\begin{equation} \label{eq:normal-prob2}
	\min_{X\in\mathbb R^{n\times k}}{\trace{(X^{T}L_{\rm n}X)}}
	\quad \mbox{s.t.} \quad X^{T} X = I_k,
\end{equation}
where
$L_{\rm n} = D^{-\frac{1}{2}}LD^{-\frac{1}{2}}$, 
which
is known as the {\it normalized Laplacian}, and then 
compute the eigenvectors $X$ corresponding to the $k$ smallest eigenvalues of $L_{\rm n}$. 
The solution of the problem~\eqref{eq:normal-prob1} is
recovered by $H=D^{-1/2}X$. The SC algorithm is summarized in Algorithm~\ref{code:algo1}.

\begin{algorithm}[h]
    \caption{SC (Spectral Clustering)}
    \label{code:algo1}
\begin{algorithmic}[1]
    \Input  weighted adjacency matrix $W \in \mathbb{R}^{n \times n}$;
    degree matrix $D \in \mathbb{R}^{n \times n}$;
    $k \in \mathbb{N}$

    \Output a clustering of indices $1:n$ into $k$ clusters
	\State compute the Laplacian matrix $L = D - W$;
	
	\State compute the normalized Laplacian $L_{\rm n} =
	D^{-\frac{1}{2}}LD^{-\frac{1}{2}}$;
	
	\State compute the $k$ smallest eigenvalues of $L_{\rm n}$ 
	and the corresponding eigenvectors $X \in \mathbb{R}^{n \times k}$;
	
	\State apply $k$-means clustering to the rows of $H = D^{-\frac{1}{2}}X$.
\end{algorithmic}
\end{algorithm}
The SC~\citep{shi2000normalized,ng2001spectral} 
is a highly successful clustering algorithm, and widely used in areas of data exploration, 
such as image
segmentation~\citep{tung2010enabling}, 
speech separation~\citep{bach2006learning} among others. 
The SC algorithm is efficient and scalable 
because it can fully take the advantage of sparsity of the SC model and use the state-of-the-art scalable sparse eigensolvers~\citep{bddrv:2000}. 

\paragraph{Fair SC.}
The group fairness constraints can be elegantly incorporated into the SC by simply adding the constraint~\eqref{eq:fairmath}
to the trace minimization problem~\eqref{eq:normal-prob1},
which leads to 
\begin{equation} \label{eq:normal-prob-fair}
	\min_{H \in \mathbb{R}^{n \times k}}{\trace{(H^{T}LH)}} 
	~~\mbox{s.t.}~~ H^{T}DH = I_{k} \mbox{ and }  F^{T}H = 0, 
\end{equation}
where $L \in \mathbb{R}^{n \times n}$ is the graph Laplacian, 
$F \in \mathbb{R}^{n \times (h-1)}$ is from~\eqref{eq:fairmath},
and $F^TH=0$ is the original~\eqref{eq:fairmath} right-multiplied 
with $\widehat D^{-1}$ due to the scaling~\eqref{eq:normal-H} of $H$.

The idea of enforcing group fairness in spectral clustering
using the optimization~\eqref{eq:normal-prob-fair}
was proposed in~\cite{kleindessner2019guarantees}.
We will show that the additional
linear constraints in~\eqref{eq:normal-prob-fair}
will introduce only marginal extra costs than solving the SC \eqref{eq:normal-prob1}.

\section{ALGORITHMS}\label{sec:algs}
In this section, we consider numerical algorithms for solving the constrained
trace minimization problem~\eqref{eq:normal-prob-fair}.
We first review the FairSC algorithm proposed in~\cite{kleindessner2019guarantees} and then address the scalability issue of the FairSC.

\subsection{FairSC algorithm}  
A nullspace-based algorithm for solving
the problem~\eqref{eq:normal-prob-fair} proposed 
in \cite{kleindessner2019guarantees} is as follows.  
Since the columns of $H$ live in the nullspace of $F^T$, we can write 
$$H = ZY$$
for some  $Y\in\mathbb{R}^{(n-h+1)\times k}$,
where $Z \in \mathbb{R}^{n \times (n-h+1)}$ 
is an orthonormal basis matrix of $\mbox{null}(F^{T})$.
Consequently, the optimization problem~\eqref{eq:normal-prob-fair} 
is equivalent to the following trace optimization
without linear constraints:
\begin{equation} \label{eq:normal-prob-fair-1}
\min_{Y \in \mathbb{R}^{(n-h+1) \times k}}{\trace{(Y^{T}[Z^{T}LZ]Y)}} 
~\mbox{s.t.}~ Y^{T}[Z^{T}DZ]Y = I_{k}.
\end{equation}
We can further transform the problem~\eqref{eq:normal-prob-fair-1} to 
the standard trace minimization~\eqref{eq:normal-prob1} 
by another change of variables 
$$X= QY 
\quad \mbox{with} \quad Q=(Z^TDZ)^{1/2},
$$
which leads to
\begin{equation} \label{eq:normal-prob-fair-2}
    \min_{X \in \mathbb{R}^{(n-h+1) \times k}}{\trace{(X^{T}MX)}} 
    \quad \mbox{s.t.} \quad X^{T}X = I_{k},
\end{equation}
where $M = Q^{-1}Z^{T}LZQ^{-1}$. 
Observe that $M$ is positive semi-definite of size $n-h+1$.
According to Theorem~\ref{th:mintr},
problem~\eqref{eq:normal-prob-fair-2}
is solved by linear eigenvalue problem $Mx =\lambda x$. 
The optimal solution $X=[x_1,\dots,x_k]$ consists of the 
eigenvectors corresponding to the $k$ smallest eigenvalue of $M$. 
Finally, 
$H = Z Q^{-1}X$
is the solution of the fair SC minimization~\eqref{eq:normal-prob-fair}.

We summarize the aforementioned algorithm
for the group-fair spectral clustering in Algorithm~\ref{code:algo2},
called FairSC.
FairSC requires two major computational kernels.
The first one is the nullspace of $F^T$ in step~2.
This can be done by the SVD of $F=U\Sigma V^T$,
where $U \in \mathbb{R}^{n\times n}$ and $V \in \mathbb{R}^{(h-1)\times (h-1)}$
are orthogonal, and $\Sigma \in \mathbb{R}^{n\times (h-1)}$ is diagonal.
According to Lemma~\ref{le:1}, $F$ has a full column rank $h-1$.
Therefore $U(:,h:n)$ is an orthonormal basis of the nullspace of $F^T$.
We can also use QR decomposition 
$F=UR$, where $U \in \mathbb{R}^{n\times n}$ is orthogonal
and $R \in \mathbb{R}^{n\times (h-1)}$ is upper triangular.
We can then set $Z=U(:,h:n)$.
For both SVD and QR, the computation complexity is about 
$\mathcal O(n(h-1)^2)$; see,
e.g.,~\cite{golub1996matrix}.
The second kernel is the matrix square root of size
$n-h+1$ in step 3.
This can be done by the blocked Schur
algorithm~\citep{higham2010computing,deadman2012blocked}.
The computation complexity is $\mathcal O((n-h+1)^3)$.
For matrices of large sizes, both kernels involving large dense matrices are 
computationally expensive due to memory space and
data communication costs.
Consequently, FairSC is only suitable for small to medium size fair SC models;
see numerical results in Section~\ref{sec:expr}.

\begin{algorithm}[h]
    \caption{FairSC} \label{code:algo2}
    \begin{algorithmic}[1]
    
    \Input weighted adjacency matrix $W \in \mathbb{R}^{n \times n}$;
		%({\bf the graph must not have any isolated vertex}); 
    degree matrix $D \in \mathbb{R}^{n \times n}$;
    group-membership matrix $F \in \mathbb{R}^{n \times (h-1)}$;
    $k \in \mathbb{N}$
    
    \Output a clustering of indices $1:n$ into $k$ clusters

    \State compute the Laplacian matrix $L = D - W$;
    
    \State compute an orthonormal basis $Z$ of the nullspace of $F^{T}$;
    
    \State compute the matrix square root $Q= (Z^{T}DZ)^{1/2}$;
    
    \State compute $M = Q^{-1}Z^{T}LZQ^{-1}$;
    
    \State compute the $k$ smallest eigenvalues of $M$ and 
	the corresponding eigenvectors $X \in \mathbb{R}^{n \times k}$;
    
    \State apply $k$-means clustering to the rows of $H = ZQ^{-1}X$.
    \end{algorithmic}
\end{algorithm}

\subsection{First variant of FairSC}
As the first variant of FairSC, we can avoid computing the square root of a dense matrix by reordering the changes of variables used in FairSC. 
Let us begin with a change of variables 
$$
X = D^{\frac{1}{2}}H
$$
and turn the optimization~\eqref{eq:normal-prob-fair} to 
\begin{equation} \label{eq:tropt_null}
    \min_{X \in \mathbb{R}^{n \times k}}{\trace{(X^{T}L_{\rm n}X)}} 
    \quad \mbox{s.t.} \quad X^{T}X = I_{k} \mbox{ and }
	C^{T}X = 0,
\end{equation}
where $L_{\rm n} = D^{-\frac{1}{2}}LD^{-\frac{1}{2}}$
is the normalized Laplacian, and $C = D^{-\frac{1}{2}}F$.
Recall that the degree matrix $D$ is diagonal,
so generating $L_{\rm n}$ and $C$ requires only row and column scaling.
Next, we remove the linear constraints $C^TX=0$ in~\eqref{eq:tropt_null}
using the nullspace basis of $C^T$. Specifically, since the columns of $X$ live in the nullspace of $C^T$ we can parameterize 
$$
X = VY
\quad \mbox{for some $Y\in\mathbb{R}^{(n-h+1)\times k}$},
$$
where $V\in\mathbb{R}^{n\times (n-h+1)}$ 
is an orthonormal basis matrix of the nullspace of $C^{T}$.
Then the optimization \eqref{eq:tropt_null} is equivalent to 
the standard trace minimization 
\begin{equation}\label{eq:tracemin1}
\min_{Y\in\mathbb{R}^{(n-h+1)\times k}}\trace{(Y^{T} \, L^{\rm v}_{\rm n}\,  Y)} 
\quad \mbox{s.t.} \quad Y^{T} Y = I_k,
\end{equation}
where 
$L^{\rm v}_{\rm n} = V^T L_n V  \in \mathbb{R}^{(n-h+1)\times (n-h+1)}$.
Consequently, by Theorem~\ref{th:mintr}, we just need to solve
the symmetric eigenvalue problem 
\begin{equation}\label{eq:lveig}
L^{\rm v}_{\rm n} y = \lambda y.
\end{equation}
The eigenvectors corresponding to the $k$ smallest eigenvalues 
provide the solution $Y$ of~\eqref{eq:tracemin1},
by which we recover the solution $H = D^{-\frac{1}{2}}VY$ 
of the fair SC minimization problem~\eqref{eq:normal-prob-fair}.

Although this variant of FairSC 
avoids computing matrix square root of a dense matrix,
the other drawbacks of FairSC remain, namely 
explicit computation of the nullspace of $C^T$
and eigenvalue computation of the dense matrix $L^{\rm v}_{\rm n}$.

\subsection{Scalable FairSC algorithm}

We now show how to reformulate the eigenvalue
problem~\eqref{eq:lveig} to address the remaining pitfalls of FairSC.
We begin with the eigenvalue problem of 
$L^{\rm v}_{\rm n} $ in~\eqref{eq:lveig}:
\begin{equation*}
(V^{T}L_{\rm n}V)\,y = \lambda y.
\end{equation*}
A left multiplication of $V$ leads to 
\begin{equation}\label{eq:peig1}
(VV^{T} L_{\rm n}VV^T) Vy = \lambda\cdot Vy,
\end{equation}
where on the left side $VV^T\cdot Vy \equiv Vy$ due to the fact $V^T V = I$.
Denote by $P = V V^{T}$
a projection matrix onto the range space of $V$ (i.e., nullspace of $C^T$). 
Then~\eqref{eq:peig1} leads to the following {\it projected eigenvalue problem}
\begin{equation}\label{eq:eig2}
L^{\rm p}_{\rm n}\, x = \lambda x, 
\end{equation}
where $x = Vy$ and
$L^{\rm p}_{\rm n} = PL_{\rm n}P \in\mathbb R^{n\times n}$.
Consequently, an eigenvalue $\lambda$ of $L^{\rm v}_{\rm n}$ in~\eqref{eq:lveig}
is also an eigenvalue of $L^{\rm p}_{\rm n}$ in~\eqref{eq:eig2}. A major advantage of the projected eigenvalue problem~\eqref{eq:eig2} is
that it may avoid the computation of the nullspace of $C^T$
by exploiting the fact that the projection matrix
\begin{equation}\label{eq:pmat2}
	P =  I - U_2U_2^T,
\end{equation}
where $U_2\in\mathbb R^{n\times (h-1)}$ is an orthonormal basis of the range $C$.
($[V,U_2]\in\mathbb R^{n\times n}$ is orthogonal)
This is especially beneficial since $C$ is a tall and skinny matrix, where $U_2\in\mathbb R^{n\times (h-1)}$ is much smaller than
$V\in\mathbb R^{n\times (n-h+1)}$.
In addition, to compute the eigenvalues of $L^{\rm p}_{\rm n}$ 
by an iterative eigensolver, we only need the matrix-vector product $L^{\rm p}_n w$ for a given vector $w$ and the matrix
$L^{\rm p}_{\rm n}$ is never formed explicitly. 
The product $Pw$ can be applied without formulating $U_2$;
see implementation detail in Section~\ref{sec:impl-iss}.

For FairSC, we need the $k$ smallest eigenvalues of the matrix
$L^{\rm v}_{\rm n}$ in \eqref{eq:lveig}. 
The following proposition
connects the eigenstructures of 
the matrices $L^{\rm v}_{\rm n}$ and $L^{\rm p}_{\rm n}$.

\begin{proposition} \label{prop:1}
Suppose $L^{\rm v}_{\rm n}$ in \eqref{eq:lveig} has the eigendecomposition
	\begin{equation}\label{eq:eiglv}
		L^{\rm v}_{\rm n}= Y \Lambda_{\rm v}Y^T,
	\end{equation}
	where $\Lambda_{\rm v}=\mbox{diag}(\lambda_1,\lambda_2,\dots,\lambda_{n-h+1})$
	contains the eigenvalues, and $Y$ is an orthogonal matrix of order
	$n-h+1$ containing eigenvectors.
	Then the matrix $L^{\rm p}_{\rm n}$ from~\eqref{eq:eig2}
	has the eigendecomposition
	\begin{equation} \label{eq:blockAp} 
		L^{\rm p}_{\rm n} = \begin{bmatrix} U_1 & U_2 \end{bmatrix}
		\begin{bmatrix} \Lambda_{\rm v} & \\ & {\bf 0}_{h-1,h-1} \end{bmatrix}
		\begin{bmatrix} U^T_1 \\ U_2^T \end{bmatrix},
	\end{equation}
	where $U=[U_1,U_2]\in \mathbb R^{n\times n}$ is orthogonal
	with $U_1 = VY \in \mathbb R^{n\times (n-h+1)}$
	and $U_2\in\mathbb R^{n\times (h-1)}$ being an arbitrary orthonormal
	basis of the range of $C$.
\end{proposition}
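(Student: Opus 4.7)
The plan is to verify that the matrix $[U_1, U_2]$ is orthogonal and then show that $L_{\rm n}^{\rm p}$ acts on $U_1$ and $U_2$ exactly as claimed by the block-diagonal factor; then the factorization in \eqref{eq:blockAp} follows by post-multiplying with $[U_1,U_2]^T$.

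First, I would confirm that $[U_1,U_2]$ is orthogonal. By hypothesis $[V,U_2]$ is orthogonal, so $V^TV=I$, $U_2^TU_2=I$, and $V^TU_2=0$. Since $U_1=VY$ with $Y$ orthogonal of order $n-h+1$, we get $U_1^TU_1=Y^TV^TVY=Y^TY=I$, $U_1^TU_2=Y^TV^TU_2=0$, and $U_1U_1^T+U_2U_2^T=VYY^TV^T+U_2U_2^T=VV^T+U_2U_2^T=I$, so $[U_1,U_2]$ is orthogonal of order $n$.

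Next, I would verify the two crucial identities for $P=VV^T$: namely $PV=V$ (since $V^TV=I$) and $PU_2=0$ (since $P=I-U_2U_2^T$ in view of \eqref{eq:pmat2}). Using these,
\begin{equation*}
L_{\rm n}^{\rm p}\,U_1 = PL_{\rm n}P\,VY = PL_{\rm n}V\,Y = VV^TL_{\rm n}V\,Y = V(L_{\rm n}^{\rm v})Y = VY\Lambda_{\rm v} = U_1\Lambda_{\rm v},
\end{equation*}
where I used \eqref{eq:eiglv} in the form $L_{\rm n}^{\rm v}Y=Y\Lambda_{\rm v}$. Similarly,
\begin{equation*}
L_{\rm n}^{\rm p}\,U_2 = PL_{\rm n}P\,U_2 = PL_{\rm n}\cdot 0 = 0.
\end{equation*}
Concatenating these two identities gives
\begin{equation*}
L_{\rm n}^{\rm p}\,[U_1\;U_2] = [U_1\;U_2]\begin{bmatrix}\Lambda_{\rm v} & \\ & {\bf 0}_{h-1,h-1}\end{bmatrix},
\end{equation*}
and right-multiplying by $[U_1\;U_2]^T$ (which equals the inverse by orthogonality) yields exactly \eqref{eq:blockAp}.

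There is no genuinely hard step here; the only care needed is in tracking that \eqref{eq:pmat2} gives $PU_2=0$ while $PV=V$, so that conjugation by $P$ preserves the eigenstructure on the range of $V$ and annihilates the orthogonal complement spanned by $U_2$. The rest is bookkeeping with orthogonality of $Y$ and $[V,U_2]$.
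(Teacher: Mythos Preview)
Your proof is correct and follows essentially the same route as the paper's: verify that $[U_1,U_2]$ is orthogonal, then check $L_{\rm n}^{\rm p}U_1=U_1\Lambda_{\rm v}$ and $L_{\rm n}^{\rm p}U_2=0$ via the identities $PV=V$ and $PU_2=0$ (equivalently, the paper writes $L_{\rm n}^{\rm p}=VL_{\rm n}^{\rm v}V^T$). The only cosmetic difference is that the paper derives $V^TU_2=0$ from the fact that $\mathrm{range}(C)$ and $\mathrm{null}(C^T)$ are orthogonal complements rather than citing it as a hypothesis, but this is the same content.
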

\begin{proof}
See Appendix~\ref{appx:prop3-1}.
\end{proof}

The following is a direct consequence of Proposition~\ref{prop:1}.

\begin{corollary} 
Let $L^{\rm v}_{\rm n}$ and $L^{\rm p}_{\rm n}$ be defined as in~\eqref{eq:tracemin1} 
and~\eqref{eq:eig2}. Then
\begin{enumerate}[(i)]
\item
If $(\lambda, y)$ is an eigenpair of $L^{\rm v}_{\rm n}$, then
$(\lambda, x)$ with $x = Vy$ is an eigenpair of $L^{\rm p}_{\rm n}$.

\item
If $(\lambda, x)$ is an eigenpair of $L^{\rm p}_{\rm n}$ and $C^{T}x =
0$, then $(\lambda, y)$ with $y = V^{T}x$ is an eigenpair of
$L^{\rm v}_{\rm n}$.
\end{enumerate}
\end{corollary}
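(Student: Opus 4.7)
The plan is to derive both parts directly from the definitions $L^{\rm v}_{\rm n} = V^T L_{\rm n} V$ and $L^{\rm p}_{\rm n} = V V^T L_{\rm n} V V^T$, together with the orthonormality $V^T V = I_{n-h+1}$ and the identification $\mbox{null}(C^T) = \mbox{range}(V)$ built into the construction of $V$. Given the full eigendecomposition established in Proposition~\ref{prop:1}, the corollary amounts to verification rather than a substantive new derivation; no appeal to the block form~\eqref{eq:blockAp} is actually required, and the argument is the same one already used implicitly in the passage from~\eqref{eq:lveig} to~\eqref{eq:eig2}.

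For part (i), I would start from $L^{\rm v}_{\rm n} y = \lambda y$ and left-multiply by $V$ to get $V V^T L_{\rm n} V y = \lambda\, V y$. Inserting $V^T V = I$ between $V$ and $y$ on the left turns this into $V V^T L_{\rm n} V V^T (V y) = \lambda\, (V y)$, which is exactly $L^{\rm p}_{\rm n} x = \lambda x$ for $x = V y$. Since $V$ has orthonormal columns, $\|V y\|_2 = \|y\|_2$, so $y \neq 0$ implies $x \neq 0$ and $(\lambda,x)$ is a genuine eigenpair.

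For part (ii), assume $L^{\rm p}_{\rm n} x = \lambda x$ with $C^T x = 0$. The second condition places $x$ in $\mbox{null}(C^T) = \mbox{range}(V)$, hence $x = V V^T x = V y$ for $y := V^T x$. Left-multiplying the eigenvalue relation $V V^T L_{\rm n} V V^T x = \lambda x$ by $V^T$ and using $V^T V = I$ collapses the left-hand side to $V^T L_{\rm n} V (V^T x) = L^{\rm v}_{\rm n} y$, while the right-hand side becomes $\lambda y$. Again $x = V y$ with $x \neq 0$ gives $y \neq 0$.

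The only pitfall to watch for is the asymmetry of the two projector identities: $V^T V = I$ can be collapsed to the identity, whereas $V V^T = P$ is merely a projector onto $\mbox{range}(V)$. Both manipulations above rely on inserting or cancelling a factor of $V^T V$, never of $V V^T$, so the simplifications go through cleanly. The condition $C^T x = 0$ in part (ii) is essential precisely because without it $x$ may have a component in $\mbox{range}(U_2)$, which by~\eqref{eq:blockAp} corresponds to the spurious zero eigenvalues introduced by the projection and would not descend to an eigenpair of $L^{\rm v}_{\rm n}$.
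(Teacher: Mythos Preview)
Your proof is correct. The paper does not give a separate argument but simply records the corollary as ``a direct consequence of Proposition~\ref{prop:1}''; your direct verification using $V^TV=I$ and $\mbox{null}(C^T)=\mbox{range}(V)$ is exactly the computation that underlies that proposition (and, as you note, the passage from~\eqref{eq:lveig} to~\eqref{eq:eig2}), so the approaches coincide.
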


Let us return to the eigenvalue problem~\eqref{eq:lveig}.
Since the matrix $L^{\rm v}_{\rm n}$ is positive semi-definite, 
it has $n-h+1$ ordered eigenvalues $0\leq \lambda_1 \leq \cdots \leq \lambda_{n-h+1}$.
By~\eqref{eq:blockAp}, the projected matrix $L^{\rm p}_{\rm n}$ has $n$ 
ordered eigenvalues:
$
\underbrace{0 = \cdots = 0}_{h-1} \leq 
\lambda_1 \leq \cdots \leq \lambda_{n-h+1}$,
where the first $h-1$ zero eigenvalues (counting multiplicity)
have eigenvectors in the range of $C$.  

In the simple case of $\lambda_1 >  0$, 
the $k$ smallest eigenvalues
$\lambda_1,\dots,\lambda_k$ of $L^{\rm v}_{\rm n}$ corresponds to the $k$ smallest
{\it positive eigenvalues} of $L^{\rm p}_{\rm n}$. 
To find those eigenvalues, we can  
first compute $K= k+h-1$ smallest eigenvalues of
$L^{\rm p}_{\rm n}$ by an eigensolver, and then select the desired $k$ eigenpairs
corresponding to non-zero eigenvalues
(alternatively, select those eigenvalues with eigenvectors orthogonal to $C$). 
However, if $\lambda_1 =  0$ (or $\lambda_1\approx 0$), 
then this simple selection scheme does not work,
as the eigenvector corresponding to $\lambda_1=0$ is mixed 
(or numerically mixed) with the eigenspace of the $h-1$ zero eigenvalues.
This eigenspace mixing issue happens, in particular, if the solution is computed by an iterative
method with low accuracy.

To address the eigenspace mixing issue, we turn to the second major contribution of this work, namely a novel use of Hotelling's deflation. In the following, we first discuss 
Hotelling's deflation~\citep{hotelling1943some}, which
is also known as explicit external deflation and is suitable for high-performance computing; 
see, e.g.,~\cite{Parlett:1998,Yamazaki:2019}.
The main idea of Hotelling's deflation is summarized in the following proposition.

\begin{proposition} \label{prop:hotelling} 
	Let
	the eigenvalue decomposition of a symmetric matrix $A \in \mathbb{R}^{n \times n}$ be given by
	\begin{equation}\label{eq:meig}
		A  = Q \Lambda Q^{T} 
		= \begin{bmatrix}
		  Q_{1} & Q_{2}
		\end{bmatrix}
		\begin{bmatrix}
		\Lambda_{1} & \\
		& \Lambda_{2}
		\end{bmatrix}
		\begin{bmatrix}
		Q_{1}^{T} \\
		Q_{2}^{T}
		\end{bmatrix},
	\end{equation}
	where 
	$\Lambda_1 \in\mathbb R^{k\times k}$
	and $\Lambda_2 \in\mathbb R^{(n-k)\times (n-k)}$
	contain eigenvalues,
	and $Q_1 \in \mathbb{R}^{n \times k}$ and $Q_2 \in \mathbb{R}^{n \times (n-k)}$ are orthonormal eigenvectors. 
	For a given shift  $\sigma\in\mathbb R$, define the  shifted matrix
	\begin{equation*}
		A_{\sigma} = A + \sigma Q_{1}Q_{1}^{T}. 
	\end{equation*}
	Then the eigenvalue decomposition of $A_{\sigma}$ has the following form
	\begin{equation}\label{eq:eigshift}
	 A_{\sigma}  = \begin{bmatrix}
	  Q_{1} & Q_{2}
	\end{bmatrix}
	\begin{bmatrix}
	\Lambda_{1} + \sigma I & \\
	& \Lambda_{2}
	\end{bmatrix}
	\begin{bmatrix}
	Q_{1}^{T} \\ Q_{2}^{T}
	\end{bmatrix}.
	\end{equation}
\end{proposition}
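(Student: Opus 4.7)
The plan is a direct verification: expand $A_\sigma$ using the given spectral decomposition, combine terms on the $Q_1$ block via orthogonality, and read off the new eigendecomposition.

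First I would write $A$ in its outer-product form using the block partition,
\begin{equation*}
A = Q_1 \Lambda_1 Q_1^T + Q_2 \Lambda_2 Q_2^T,
\end{equation*}
and substitute into the definition of $A_\sigma$ to get
\begin{equation*}
A_\sigma = Q_1 \Lambda_1 Q_1^T + Q_2 \Lambda_2 Q_2^T + \sigma Q_1 Q_1^T.
\end{equation*}
Since $\sigma Q_1 Q_1^T = Q_1 (\sigma I_k) Q_1^T$, the first and third terms share the outer factors $Q_1, Q_1^T$, so I would combine them as
\begin{equation*}
A_\sigma = Q_1 (\Lambda_1 + \sigma I) Q_1^T + Q_2 \Lambda_2 Q_2^T,
\end{equation*}
which, reassembled in block-matrix form, is exactly the claimed expression~\eqref{eq:eigshift}.

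Second, I would confirm that this is indeed an eigendecomposition, not merely an algebraic identity of the right shape. This amounts to checking that $[Q_1, Q_2]$ is still orthogonal (inherited from $Q$ in~\eqref{eq:meig}) and that the middle factor is still diagonal, so that the columns of $Q_1$ are eigenvectors of $A_\sigma$ with eigenvalues shifted by $\sigma$ while the columns of $Q_2$ remain eigenvectors with unchanged eigenvalues. Concretely, $A_\sigma Q_1 = Q_1(\Lambda_1 + \sigma I) + Q_2 \Lambda_2 Q_2^T Q_1 = Q_1(\Lambda_1 + \sigma I)$ by $Q_2^T Q_1 = 0$, and analogously $A_\sigma Q_2 = Q_2 \Lambda_2$ using $Q_1^T Q_2 = 0$.

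There is no substantive obstacle here. The only thing worth stating carefully is the role of orthogonality: the rank-$k$ update $\sigma Q_1 Q_1^T$ is constructed so that its range lies in $\operatorname{range}(Q_1)$, which is invariant and complementary to $\operatorname{range}(Q_2)$; this is precisely why the perturbation acts as a pure spectral shift on the $\Lambda_1$ block and leaves $\Lambda_2$ untouched. Noting this observation essentially completes the proof.
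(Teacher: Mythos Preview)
Your proposal is correct and follows essentially the same route as the paper: a direct verification that $A_\sigma Q_1 = Q_1(\Lambda_1+\sigma I)$ and $A_\sigma Q_2 = Q_2\Lambda_2$ using the orthogonality relations $Q_1^TQ_1=I$ and $Q_1^TQ_2=0$. The paper's proof simply omits the preliminary outer-product expansion and goes straight to these two eigenvector checks, but the substance is identical.
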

\begin{proof}
    See Appendix~\ref{appx:prop3-2}.
\end{proof}

Suppose we are interested in the eigenvalues $\Lambda_2$ and 
the corresponding eigenvectors $Q_2$.
By Proposition~\ref{prop:hotelling},  
if we choose the shift $\sigma$ sufficiently large,
eigenvalues in $\Lambda_2$ will always correspond to the 
$n-k$ smallest eigenvalues of $A_\sigma$,
since the unwanted eigenvalues $\Lambda_1$ are shifted away to $\Lambda_1+\sigma I$.
The corresponding eigenvectors remain unchanged.
This is exactly what we need to untangle the 
unwanted $h-1$ zero eigenvalues of $L^{\rm p}_{\rm n}$ in~\eqref{eq:blockAp}
from the rest of the eigenvalues of $\lambda_i$. 

Recall the eigenvalue decomposition of $L^{\rm p}_{\rm n}$ in~\eqref{eq:blockAp}.
To shift away the unwanted $h-1$ zero eigenvalues, 
we can apply Hotelling's deflation with a shift $\sigma$ to obtain
\begin{equation}\label{eq:lsigma}
L^{\sigma}_{\rm n} := L^{\rm p}_{\rm n} + \sigma U_{2}U_{2}^{T},
\end{equation}
where recall that $U_2$ is an orthonormal basis for the range of $C$.
If the shift $\sigma$ is chosen such that 
$\sigma > \lambda_{k}$, where 
$\lambda_k$ is the $k$-th smallest eigenvalue in $\Lambda_{\rm v}$,
then our desired $k$ smallest eigenvalues in $\Lambda_{\rm v}$
are corresponding to the $k$ smallest eigenvalues of 
$L^{\sigma}_{\rm n}$.
Consequently, the eigenspace mixing issue is solved. On the other hand, by~\eqref{eq:pmat2}, the shifted matrix in~\eqref{eq:lsigma} can be expressed 
as follows
\begin{equation}\label{eq:asigma}
L^{\sigma}_{\rm n} = PL_{\rm n}P + \sigma (I-P) 
= P(L_{\rm n}-\sigma I)P + \sigma I.
\end{equation}

Then the matrix-vector multiplication with $L^{\sigma}_{\rm n}$
only requires operations with $P$ and $L_{\rm n}$.
This is extremely beneficial for large-scale fair SC models.

\subsection{Algorithm and implementation}
As we described in the previous section, Hotelling's deflation with a proper choice of $\sigma$
resolves the eigenspace mixing issue for 
the projected eigenvalue problem~\eqref{eq:eig2}.
To summarize, the solution of the constrained trace minimization~\eqref{eq:normal-prob-fair}
can now be characterized by the following 
proposition.

\begin{proposition}
Let $L^{\sigma}_{\rm n} \in \mathbb R^{n\times n}$ be defined
as in~\eqref{eq:asigma} and  assume $\sigma$ is sufficiently large such that
$\sigma > \lambda_{k}(L^{\rm v}_{\rm n})$,
where $\lambda_k(L^{\rm v}_{\rm n})$ is the 
$k$-th smallest eigenvalue of 
$L^{\rm v}_{\rm n}$ in~\eqref{eq:lveig}. 
Then $H$ is a solution to the trace
minimization~\eqref{eq:normal-prob-fair}
if and only if 
$H = D^{-\frac{1}{2}}X$,
where $X=[x_1,x_2,\dots,x_k]\in\mathbb R^{n\times k}$
contains the $k$ eigenvectors corresponding to the $k$ 
smallest eigenvalues of $L^{\sigma}_{\rm n}$.
\end{proposition}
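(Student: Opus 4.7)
The plan is to establish the proposition by chaining together the equivalences already developed in Section~\ref{sec:algs}. First I would invoke the reduction of the fair SC minimization~\eqref{eq:normal-prob-fair} to the unconstrained trace problem~\eqref{eq:tracemin1}: the change of variables $X = D^{1/2}H$ recasts~\eqref{eq:normal-prob-fair} as~\eqref{eq:tropt_null}, and the parameterization $X = VY$ with $V$ an orthonormal basis of the nullspace of $C^T$ yields~\eqref{eq:tracemin1}. By Theorem~\ref{th:mintr} the optimal $Y$ consists of the $k$ eigenvectors of $L^{\rm v}_{\rm n}$ corresponding to its $k$ smallest eigenvalues $\lambda_1,\dots,\lambda_k$, so $H$ solves~\eqref{eq:normal-prob-fair} if and only if $H = D^{-1/2}VY$ for some such $Y$.

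Next I would transfer these eigenvectors over to $L^{\rm p}_{\rm n}$ using Proposition~\ref{prop:1}. The block decomposition~\eqref{eq:blockAp} shows that the nonzero part of the spectrum of $L^{\rm p}_{\rm n}$ coincides with the spectrum of $L^{\rm v}_{\rm n}$, with eigenvectors $U_1 = VY$, while the remaining $h-1$ zero eigenvalues are carried by the columns of $U_2$ spanning the range of $C$. In particular, the first $k$ columns of $U_1$ form a set of eigenvectors of $L^{\rm p}_{\rm n}$ associated with the eigenvalues $\lambda_1,\dots,\lambda_k$ of $L^{\rm v}_{\rm n}$.

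The final step is to apply Proposition~\ref{prop:hotelling} with $A = L^{\rm p}_{\rm n}$ and deflating subspace $Q_1 = U_2$, i.e., the block of $h-1$ zero eigenvalues in~\eqref{eq:blockAp}. The identity~\eqref{eq:eigshift} shows that $L^{\sigma}_{\rm n} = L^{\rm p}_{\rm n} + \sigma U_2 U_2^T$ has the same eigenvectors $[U_1,U_2]$ as $L^{\rm p}_{\rm n}$, but with the zero block shifted to $\sigma I_{h-1}$, while $\Lambda_{\rm v}$ and its eigenvectors $U_1$ are left unchanged. The hypothesis $\sigma > \lambda_k(L^{\rm v}_{\rm n})$ then forces $\lambda_1 \le \cdots \le \lambda_k < \sigma$, so the $k$ smallest eigenvalues of $L^{\sigma}_{\rm n}$ are exactly $\lambda_1,\dots,\lambda_k$ and a corresponding matrix $X$ of eigenvectors agrees with $U_1(:,1:k) = VY$ (for the appropriate $Y$) up to an orthogonal rotation within each repeated-eigenvalue block. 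Combined with the first step, this yields the claimed equivalence $H = D^{-1/2}X$ in both directions.

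The main difficulty I foresee is not analytical but bookkeeping: tracking the orthogonal invariance of the trace minimization (Theorem~\ref{th:mintr}) through the three successive changes of variable $H \mapsto X = D^{1/2}H$, $X \mapsto Y$ via the factor $V$, and the reembedding $Y \mapsto VY = U_1$. Because each stage preserves the trace value and pins down the columns only up to an orthogonal rotation inside the bottom $k$-dimensional eigenspace, the ``if and only if'' in the statement must be read with this standard freedom; in particular, when $\lambda_k = \lambda_{k+1}$ the selection of ``the'' $k$ smallest eigenvectors of $L^{\sigma}_{\rm n}$ is unique only up to such a rotation, exactly as for $L^{\rm v}_{\rm n}$. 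No other nontrivial step is needed, since all the spectral ingredients have been set up by Propositions~\ref{prop:1} and~\ref{prop:hotelling}.
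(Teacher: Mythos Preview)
Your proposal is correct and follows essentially the same route as the paper: the paper does not give a separate proof of this proposition but presents it as a summary of the development in Section~\ref{sec:algs}, namely the reduction of~\eqref{eq:normal-prob-fair} to~\eqref{eq:tracemin1}, the spectral correspondence between $L^{\rm v}_{\rm n}$ and $L^{\rm p}_{\rm n}$ via Proposition~\ref{prop:1}, and then Hotelling's deflation (Proposition~\ref{prop:hotelling}) to separate the unwanted $h-1$ zero eigenvalues. Your observation about the orthogonal freedom in the eigenvector selection is a legitimate caveat that the paper leaves implicit.
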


The final algorithm
based on projected eigenproblem~\eqref{eq:eig2} and Hotelling's deflation is presented in Algorithm~\ref{code:algo3}, called 
scalable FairSC, s-FairSC in short.  

\begin{algorithm}[h]
    \caption{Scalable FairSC (s-FairSC)}
    \label{code:algo3}
    \begin{algorithmic}[1]
            \Input weighted adjacency matrix $W \in \mathbb{R}^{n \times n}$;
		degree matrix $D \in \mathbb{R}^{n \times n}$;
    group-membership matrix $F \in \mathbb{R}^{n \times (h-1)}$;
    shift $\sigma \in \mathbb{R}$;
    $k \in \mathbb{N}$
    \Output a clustering of indices $1:n$ into $k$ clusters
    
    \State compute the Laplacian matrix $L = D - W$;
    
    \State set $L_{\rm n} = D^{-\frac{1}{2}}LD^{-\frac{1}{2}}$,
    and $C = D^{-\frac{1}{2}}F$;
    
    \State compute the $k$ smallest eigenvalues of $L^{\sigma}_{\rm n}$ in~\eqref{eq:asigma} and the corresponding eigenvectors as columns of $X \in \mathbb{R}^{n \times k}$;
    
    \State apply $k$-means clustering to the rows of $H = D^{-\frac{1}{2}}X$.
    
    \end{algorithmic}
\end{algorithm}

\paragraph{Implementation issues.}~\label{sec:impl-iss}
A few implementation issues regarding 
s-FairSC (Algorithm~\ref{code:algo3}) are in order. (i) For computing eigenvalue of $L^{\sigma}_{\rm n}$,
		we can use an iterative eigensolver, such as {\tt eigs} in MATLAB, which is based on ARPACK~\citep{lehoucq1998arpack}, 
        an implicitly restarted Arnoldi method.
        The eigensolver only needs to access
		$L^{\sigma}_{\rm n}$ through the matrix-vector multiplication
		$L^{\sigma}_{\rm n}w$.
		By~\eqref{eq:asigma}, 
		\begin{align*}
		    L^{\sigma}_{\rm n} w 
		% = \Big(P(L_{\rm n}-\sigma I)P + \sigma I \Big) w
		%&= P(L_{\rm n}-\sigma I)Pw + \sigma w \\
		&= P(L_{\rm n}(Pw)) - \sigma Pw + \sigma w.
		\end{align*}
		%It requires to call two times the projection operation with
		%$P$.  
(ii) The projection
$P$ in~\eqref{eq:pmat2}
can be written as $P = I - C(C^TC)^{-1}C^T$, see~\cite{Golub:2000}.
Consequently, 
\begin{equation} \label{eq:mv-Pw}
Pw =  (I - C(C^TC)^{-1}C^T) w =  w - Cz,
\end{equation}
where $z$
%$ z=(C^{T} C)^{-1} C^{T} w$ 
    is the solution to the least-squares problem
    \[
\min_{z} \|Cz - w\|_2. 
\]
	For small to moderate size problems, direct LS solver 
    can be applied to computing $z$.
    For large scale problems, iterative methods such as LSQR~\citep{paige1982lsqr}
	can be applied; this is in line with the inner-outer iteration methods for eigenvalue computation, see e.g., \cite{Golub:2000}.
(iii) For an appropriate choice of shift $\sigma$, one can use an
estimation for the largest eigenvalue of $L_{\rm n}$.
%as recommended
%in~\cite{Lin:2021}.
Such a shift also guarantees the numerical stability of Hotelling's deflation~\citep{Lin:2021}.

\paragraph{Time Complexity.} \label{sec:complexity}
The complexity of s-FairSC is dominated 
by computing $k$ eigenpairs of the matrix $L_{\rm n}^{\sigma}$. 
To use a modern Krylov subspace eigensolver, 
say the function {\tt eigs} in MATLAB, the two leading costs are 
(1) the matrix-vector product $L_{\rm n}^{\sigma}w$, and (2) the orthonormalization of
basis vectors of Krylov subspace eigensolver. 
For (1), the complexity is $\mathcal{O}(m+nh^2)$ and for (2), it is $\mathcal{O}(nk^2)$, 
where $n = |V|$, $m$ is the number of non-zero elements of $W$ of graph $\mathcal{G}(V,W)$, $h$ is the number of groups
and assume that the product $Pw$ for the projection matrix $P$
is computed by a direct least squares solver since $h$ is 
typically small. Therefore, the complexity of s-FairSC 
is $\mathcal{O}(m+n(h^2+ k^2))$, where the constant of $\mathcal{O}(\cdot)$
depends on the number of restarts of subspace iterations, 
usually about 10 to 20. 
Using
the same analysis, the complexity of SC (without fairness constraints) is $\mathcal{O}(m+nk^2)$.
Since $h$ is typically small, say $h = 10$, it explains observations that
s-FairSC is as fast as SC; see numerical results in 
Section~\ref{sec:ex-results}.
%(Figure~\ref{fig:benchmarkSBM} and Figure~\ref{fig:time_lapla}).

\subsection{Related work} 
The idea of transforming the optimization problem~\eqref{eq:normal-prob-fair} to an equivalent eigenvalue
problem is very natural. 
For the case of $k=1$, the projected eigenvalue problem~\eqref{eq:eig2}
was considered in~\cite{Golub:1973}
and~\cite[p.~621]{golub1996matrix}.

The projected eigenvalue problem~\eqref{eq:eig2}
is a form of so-called {\it constrained eigenvalue problems},
which is more generally formulated as 
$Ax = \lambda M x$ subject to $C^Tx = 0$,
where $A$ and $M$ are symmetric and $M$ is positive definite.
The constrained eigenvalue problems 
are found in many applications. There are a number of approaches available;
see~\cite{doi:10.1137/S0895479800381331}
for an algorithm for positive semidefinite $A$ with a known nullspace;
~\cite{baker2009preconditioning} for a preconditioning technique;
~\cite{Golub:2000} for a Lanczos process with inner-outer iterations 
to handle large matrices;
and ~\cite{porcelli2015solution} for a solution procedure 
within the structural finite-element code NOSA-ITACA.
For constrained eigenvalue problems,
the matrix $C$ is typically corresponding to the nullspace of $A$, 
and the constraint $C^Tx = 0$ is to avoid computing ``null eigenvectors''. 
Since the entire nullspace of $A$ is avoided,
there is no eigenvector selection issue as in our problem~\eqref{eq:eig2}.
\cite{simoncini2003algebraic} proposed a reformulation of
the constrained eigenproblem based on null eigenvalue shifting.
Her approach essentially includes Hotelling's deflation 
as a special case, but with a goal to shift away the entire 
nullspace of $A$.  By discussion in Section~\ref{sec:algs}, we show that 
Hotelling's deflation is also capable of splitting the unwanted 
null vectors from those desired ones.

\section{EXPERIMENTS}\label{sec:expr}
In this section, we present experimental results on the proposed s-FairSC 
(Algorithm \ref{code:algo3}).
Similar to SC and FairSC, s-FairSC is implemented in MATLAB\textsuperscript{\textregistered}.\footnote{SC and FairSC code: \url{https://github.com/matthklein/fair_spectral_clustering}.
s-FairSC code:  \url{https://github.com/jiiwang/scalable_fair_spectral_clustering}}
The results are obtained from
a MacBook Pro with an 8-core i9 processor @2.3 GHz, 
16 GB memory, and 16 MB L3 cache.

\subsection{Datasets} \label{sec:ex-data}

\paragraph{Modified Stochastic block model (m-SBM).} \label{sec:sbm}
The stochastic block model (SBM)~\citep{HOLLAND1983109} is 
a random graph model with planted blocks (ground-truth clustering).
It is widely used to generate synthetic networks 
for clustering and community 
detection~\citep{rohe2011spectral,balakrishnan2011noise,lei2015consistency,sarkar2015role}.
To take group fairness into account, 
we use a modified SBM (m-SBM) proposed 
by~\cite{kleindessner2019guarantees}
to generate the test graph $\mathcal{G}(V,W)$.
%In m-SBM, $n$ vertices are assigned to $k$
%clusters for a prescribed fair ground-truth clustering
%$V = C_1 \cup \cdots \cup C_k$, 
%and edges are placed between  vertex pairs with probabilities dependent only on 
%whether they belong to the same cluster or group, 
%hence graph $\mathcal{G}(V,W)$ is generated (see Appendix~\ref{appx:msbm} for details).
In m-SBM, $n$ vertices are assigned to $k$
prescribed (ground-truth) clusters $V = C_1 \cup \cdots \cup C_k$, 
and between any pair of vertices, an edge is placed with 
a probability that depends only on the clusters of the two vertices
(see Appendix~\ref{appx:msbm} for details).
Let $V = \widehat{C}_1 \cup \cdots \cup \widehat{C}_k$ be
a computed clustering. 
%The quality with respect  to the fair ground-truth is measured by the {\it error rate} on the proportion of misclustered vertices:
The discrepancy between the computed and ground-truth clustering is measured by the {\it error rate} 
of clustering  (proportion of misclustered vertices):
\begin{equation} \label{eq:err}
\mbox{Err}(\widehat{H} - H) := 
\frac{1}{n} \min_{J \in \Pi_{k}}\norm{\widehat{H}J - H}^{2}_{F},
\end{equation}
where $H$ and $\widehat{H}$ are the ground-truth and computed cluster indicator matrices,  respectively, 
and $\Pi_k$ is the set of all possible $k \times k$ permutation matrices. 

\paragraph{FacebookNet.} \label{sec:facebook}
FacebookNet\footnote{\url{http://www.sociopatterns.org/datasets/high-school-contact-and-friendship-networks/}} 
is a dataset that  collects Facebook friendship relations 
between students in a high school in France in 2013.
This social network dataset was studied for information propagation and opinion formation~\cite{10.1371/journal.pone.0136497} and for clustering~\citep{crawford2018cluenet,kleindessner2019guarantees,chodrow2021generative}.
In graph $\mathcal{G}(V, W)$, $V$ is the set of students ($n = |V| = 155$),
and an edge %encoded in $W$ 
represents a friendship between two students.
Students are divided by gender into two 
groups $V = V_1 \cup V_2$, with $|V_1| = 70$ of girls and  $|V_2| = 85$ of boys.

\paragraph{LastFMNet.} \label{sec:lastfm}
LastFMNet\footnote{\url{http://snap.stanford.edu/data/feather-lastfm-social.html}}~\citep{feather}
is a real-world dataset that contains mutual follower relations among 
users of Last.fm,  a recommender-system-based online radio and music community in Asia. 
LastFMNet was collected from public API
in 2020 and used to study the distribution of vertex features on graphs.
In graph $\mathcal{G}(V, W)$, 
$V$ is the set of users with $n = |V| = 5576$, 
and an edge %encoded in $W$ 
represents a mutual follower friendship between two users. %In addition, 
LastFMNet also records nationalities of the users $V = V_1 \cup \dots \cup V_6$ with $|V_1| = 1073$, $|V_2| = 505$, $|V_3| = 645$,  $|V_4| = 1266$, $|V_5| = 558$ and $|V_6| = 1529$.
$\mathcal{G}(V, W)$ has 19587 edges, and the density is 0.00013.

\paragraph{Random Laplacian.} \label{sec:random}
To create a random Laplacian of graph $\mathcal{G}(V, W)$, 
% and assume every edge of $E$ has an equal weight of 1.
we first generate a random symmetric weight matrix $W \in \mathbb{R}^{n \times n}$ with 
prescribed sparsity $s$ and $n=|V|$,
and then set the degree matrix $D = \mbox{diag}(W {\bf 1}_n)$
and the Laplacian $L = D - W$. 
The matrix $F \in \mathbb{R}^{n \times (h-1)}$ in the constraints of the fair SC model~\eqref{eq:normal-prob-fair} 
is also constructed as a random matrix. 
%In this case, $F$ does not contain group-membership information, but only preserves the dimension. 
Here, $F$ is not for the group-membership information but only acts as a placeholder.
% The MATLAB script to generate the random Laplacian is given below:
% \begin{verbatim}
%     A = sprand(n,n,s);  % s is for sparsity, say s = 0.1
%     A = tril(A,-1);
%     A = (A+A')/2;
%     A = A.*~eye(size(A));
%     D = diag(A*ones(n,1));
%     F = rand(n,h-1);
% \end{verbatim}
We will use this dataset to show the scalability of algorithms.

\subsection{Experimental results} \label{sec:ex-results}

\paragraph{Experiment 1.} \label{exp:1}
This experiment is conducted on the m-SBM
to compare the error rate~\eqref{eq:err} and running time of SC,  FairSC, and s-FairSC. 
%Figure~\ref{fig:benchmarkSBM} depicts 
%the error rate and running timing of SC, FairSC and s-FairSC.
Figure~\ref{fig:benchmarkSBM} depicts the computation results.
SC and s-FairSC are tested for model sizes from $n = 1000$ to $10000$. 
FairSC stops at $n = 4000$ due to its high computational cost, 
echoing results reported in
% Similar to the experiment reported in 
~\cite{kleindessner2019guarantees}.

\begin{figure}[ht!]
\begin{center}
    \includegraphics[width=0.48\textwidth]{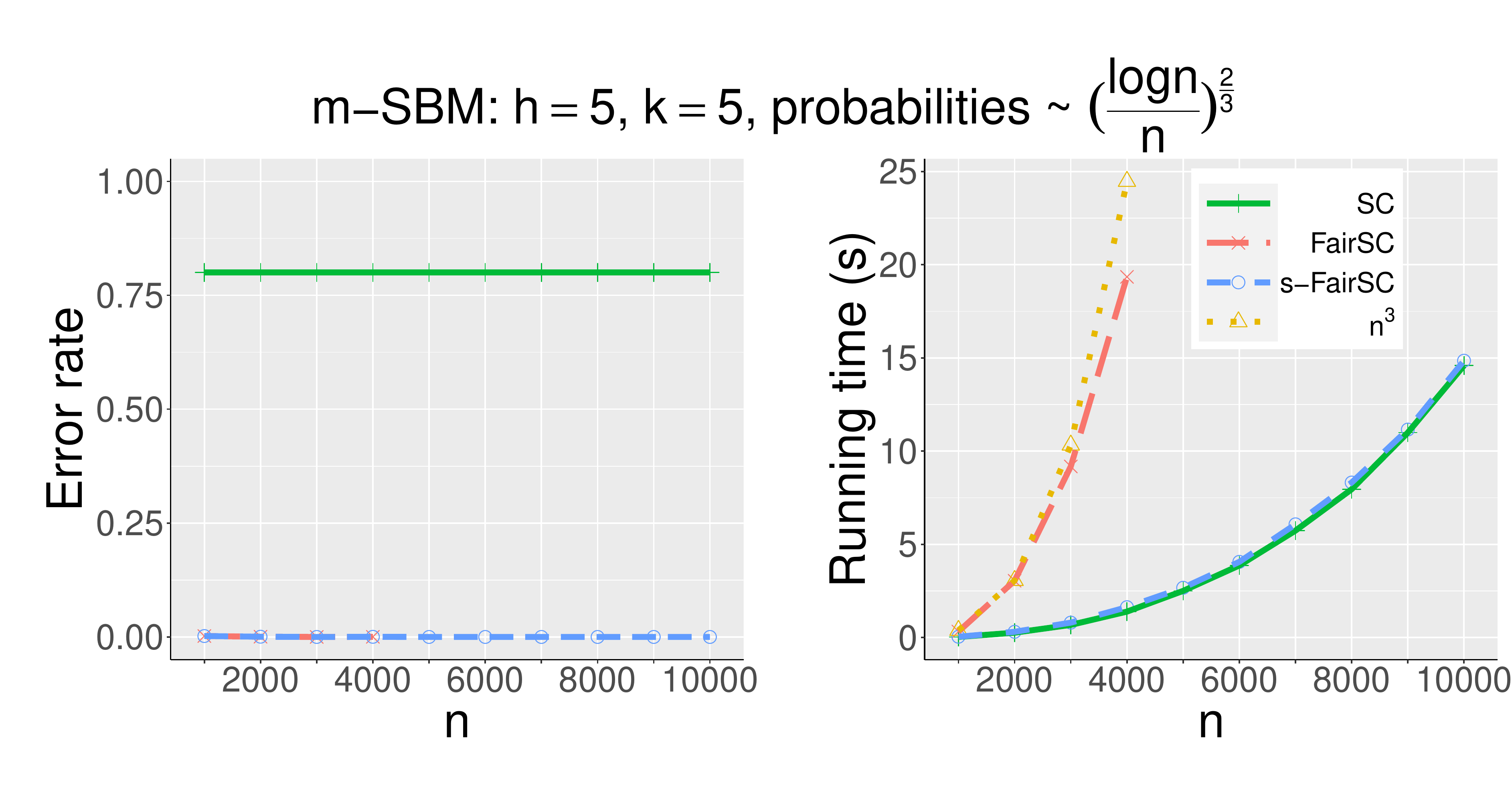} 
    \end{center}
\caption{Error rate and running time (in seconds) of SC, FairSC and s-FairSC of
an m-SBM with $h = 5$, $k = 5$, and 
edge connectivity probabilities proportional 
to $(\frac{\log n}{n})^\frac{2}{3}$.
% : $a = 10\times (\frac{\log n}{n})^{2/3}, 
% b = 7\times (\frac{\log n}{n})^{2/3}, 
% c = 4\times (\frac{\log n}{n})^{2/3}, 
% d = (\frac{\log n}{n})^{2/3}$. 
}
\label{fig:benchmarkSBM}
\end{figure}
    
%From Figure~\ref{fig:benchmarkSBM}, we observe that
%SC fails to recover 
%the fair ground-truth clustering, 
%while both FairSC  
%and s-FairSC are able to retrieve the fair ground-truth clustering. 

From Figure~\ref{fig:benchmarkSBM}, we observe that
both  FairSC  and s-FairSC successfully retrieve 
the fair ground-truth clustering,  but SC fails. 
We can see that  s-FairSC is as good as FairSC  in terms of error rate for the computed clustering.
But its running time is only a fraction of that of FairSC;
e.g., for $n = 4000$, s-FairSC is 12$\times$ faster than FairSC. 
We also observe that s-FairSC is as scalable as SC, but the latter does not account for the fairness constraints. 

\paragraph{Experiment 2.}
We use the FacebookNet dataset 
%to quantify the approximation to the exact group fairness.
to quantify the group fairness in the computed clustering.
Table~\ref{tab:checklemma} records the quantities  from Definition~\ref{def:fair}. 

\begin{table}[ht!]
    \begin{center}
    \resizebox{0.7\columnwidth}{!}{
        \begin{tabular}{|| c || c | c | c||} 
         \hline
         & {\bf SC}  & {\bf FairSC}  & {\bf s-FairSC}  \\ 
         \hline\hline 
         
         & \multicolumn{3}{c||}{} \\ [-1em]
         $\frac{|V_{1}|}{|V|}$ & \multicolumn{3}{c||}{0.4516} \\ [0.2em]
         \hline 
         & & & \\ [-1em]
         $\frac{|V_{1}\cap \widehat{C}_{1}|}{|\widehat{C}_{1}|}$  & 0.6528 & 0.3537 & 0.3537  \\ [0.5em]
         \hline
         
          & & & \\ [-1em]
         $\frac{|V_{1}\cap \widehat{C}_{2}|}{|\widehat{C}_{2}|}$ & 0.2771 &  0.5616 & 0.5616 \\ [0.5em]
         \hline 
         
          & \multicolumn{3}{c||}{} \\ [-1em]
         $\frac{|V_{2}|}{|V|}$ & \multicolumn{3}{c||}{0.5484} \\ [0.2em]
         \hline 
         & & & \\ [-1em]
         $\frac{|V_{2}\cap \widehat{C}_{1}|}{|\widehat{C}_{1}|}$  & 0.3472 & 0.6463 & 0.6463 \\ [0.5em]
         \hline
         & & & \\ [-1em]
         $\frac{|V_{2}\cap \widehat{C}_{2}|}{|\widehat{C}_{2}|}$ & 0.7229 & 0.4384  & 0.4384 \\ [0.5em]
         \hline
        \end{tabular}
        }
    \end{center}
    \caption{Fractions of group membership within each cluster for the recovered clustering $V = \widehat{C}_1\cup\widehat{C}_2$.
    % $ = V_1 \cup V_2.$
    }\label{tab:checklemma}
\end{table}

%The concept of the average balance 
%introduced in~\cite{chierichetti2017fair} 
%has been used as a metric to assess the fairness of a  clustering.

The average balance  introduced in~\cite{chierichetti2017fair} 
has been used to measure fairness in clustering.
Given a clustering $V = C_{1}\cup\cdots\cup C_{k}$ and group partition $V = V_1 \cup \cdots \cup V_h$, 
the balance of cluster $C_{\ell}$ for $\ell = 1, 2, \cdots, k$
is defined as  
\begin{equation} \label{eq:bal}
    \mbox{balance}(C_{\ell}) := \min_{s \neq s' \in \{1, \cdots, h\}} \frac{|V_{s}\cap C_{\ell}|}{|V_{s'}\cap C_{\ell}|} \in [0,1].
\end{equation}
The average balance is then given by
\begin{equation} \label{eq:avebal}
    \mbox{Average\_Balance} := \frac{1}{k}\sum_{l=1}^{k} \text{balance}(C_{\ell}).
\end{equation}
A higher balance implies a fairer clustering; see Appendix~\ref{appx:bal} for an explanation. 

\begin{figure}[ht!]
\centering{
\includegraphics[width=0.3\textwidth]{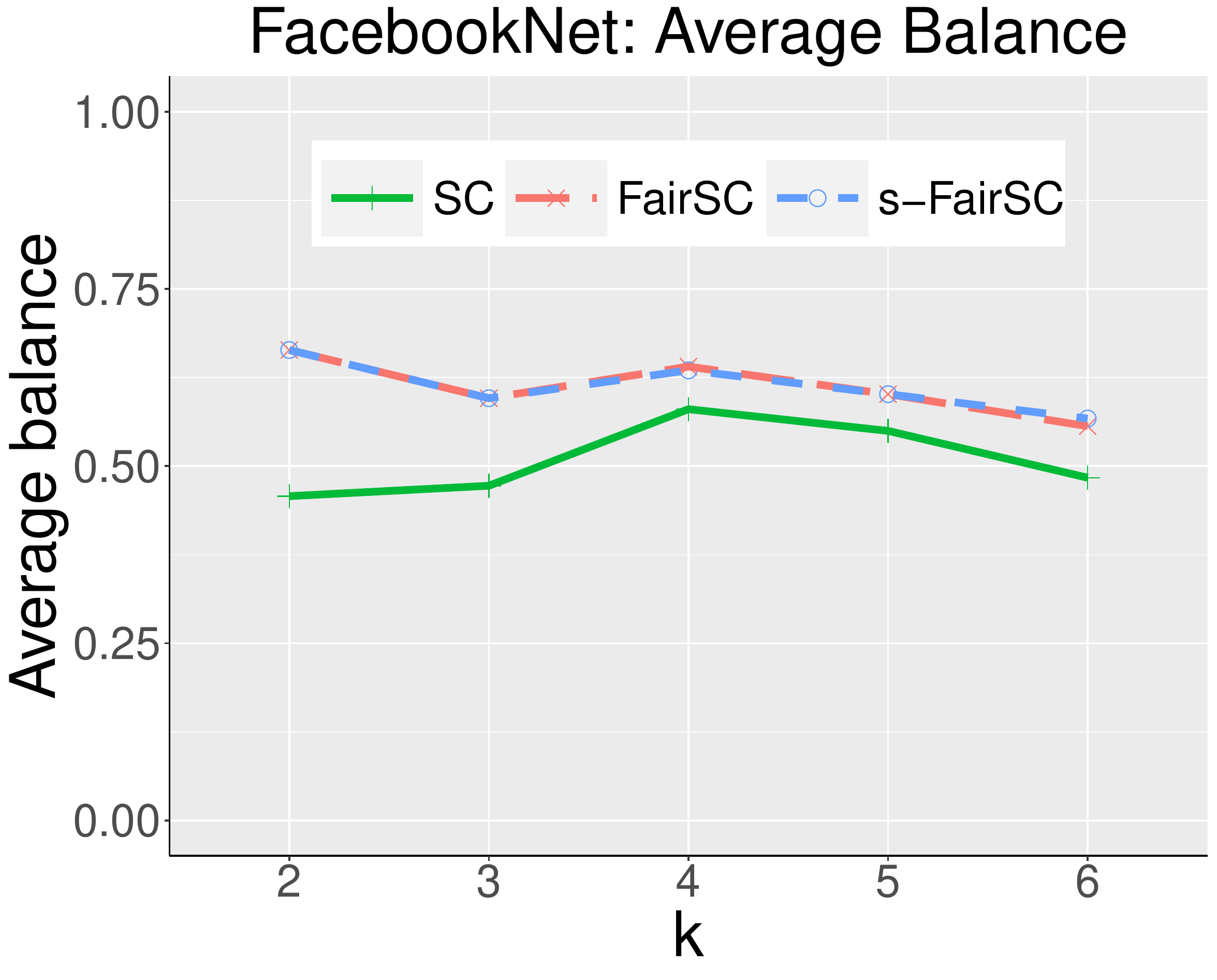} }
\caption{
Average\_Balance of SC, FairSC, and s-FairSC on FacebookNet
as a function of the number $k$ of clusters.
}
\label{fig:fb-bal}
\end{figure}

By Table~\ref{tab:checklemma} and Figure~\ref{fig:fb-bal},
we observe that since problem~\eqref{eq:normal-prob} is relaxed to 
problem~\eqref{eq:normal-prob1}, the equality in~\eqref{eq:fair} 
does not hold.
Nevertheless, 
both FairSC and s-FairSC have improved fairness compared to SC.  
In addition, FairSC and s-FairSC produce almost identical results. 

\paragraph{Experiment 3.} \label{exp:3}
In this experiment, we use LastFMNet to compare the running time of SC, FairSC, and s-FairSC. 
We also measure average balance~\eqref{eq:avebal} to evaluate the
fairness of clustering by the algorithms. %SC, FairSC, and s-FairSC. 
The running time %of SC, FairSC, and s-FairSC 
as a function of the number $k$ of clusters %on LastFMNet dataset
is illustrated in Figure~\ref{fig:time_fm}. 
We observe that when $k \geq 5$,
% the running time of the three algorithms stabilize: 
s-FairSC is 7$\times$ faster than FairSC, 
and it is as fast as SC.
Figure~\ref{fig:fm-bal} shows 
the values of
Average\_Balance as a function of the number $k$ of clusters. 
Both FairSC and s-FairSC have higher values of Average\_Balance than SC, 
indicating they %that FairSC and s-FairSC 
have improved fairness compared to SC. 

%Additionally, FairSC and s-FairSC produce almost identical results.

\begin{figure}[ht!]
\begin{center} 
\includegraphics[width=0.32\textwidth]{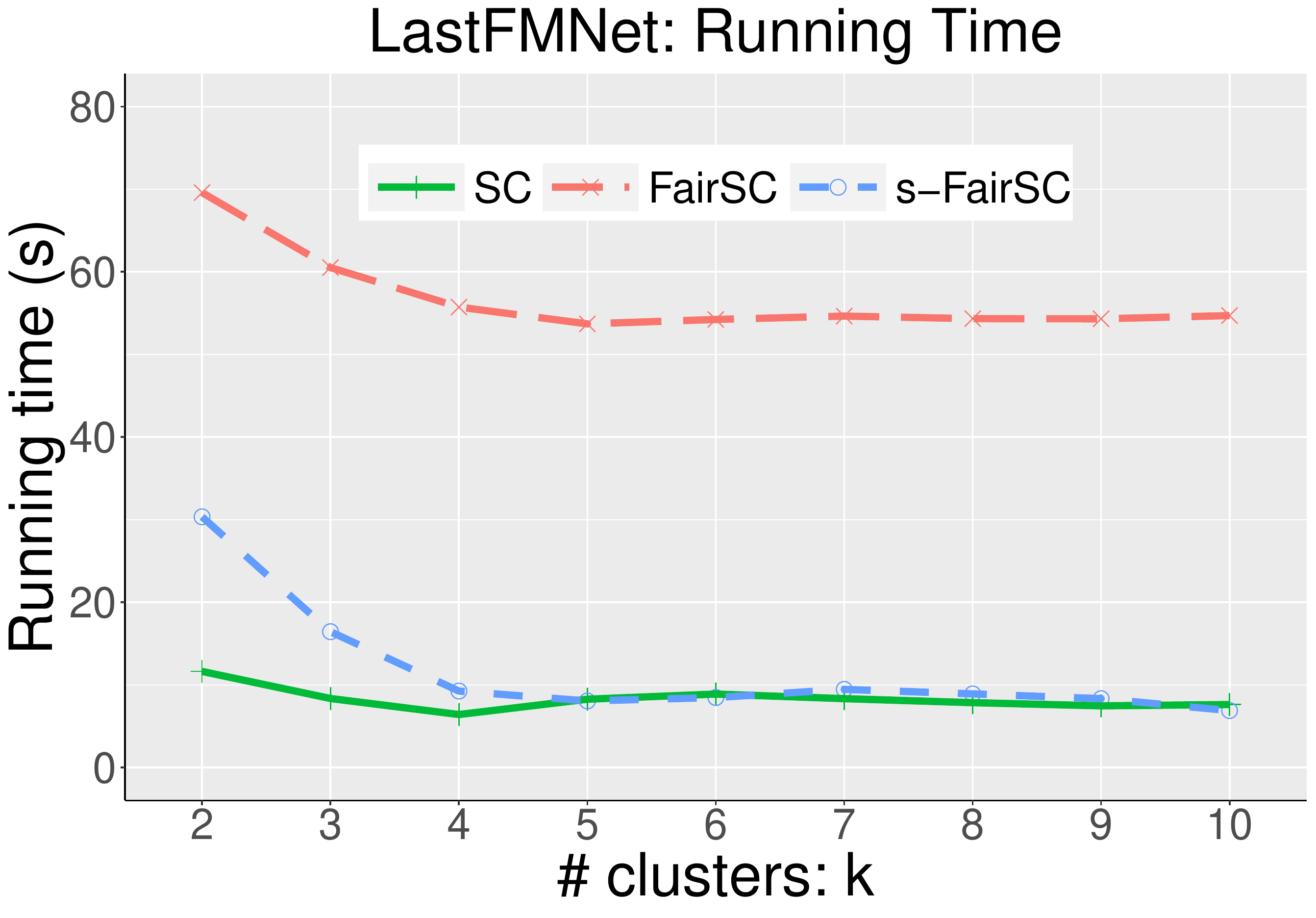}
\end{center} 
\caption{Running time (in seconds) of SC, FairSC, and s-FairSC on LastFMNet as a function of 
the number $k$ of clusters.
}
\label{fig:time_fm}
\end{figure} 

\begin{figure}[ht!]
\begin{center} 
\includegraphics[width=0.32\textwidth]{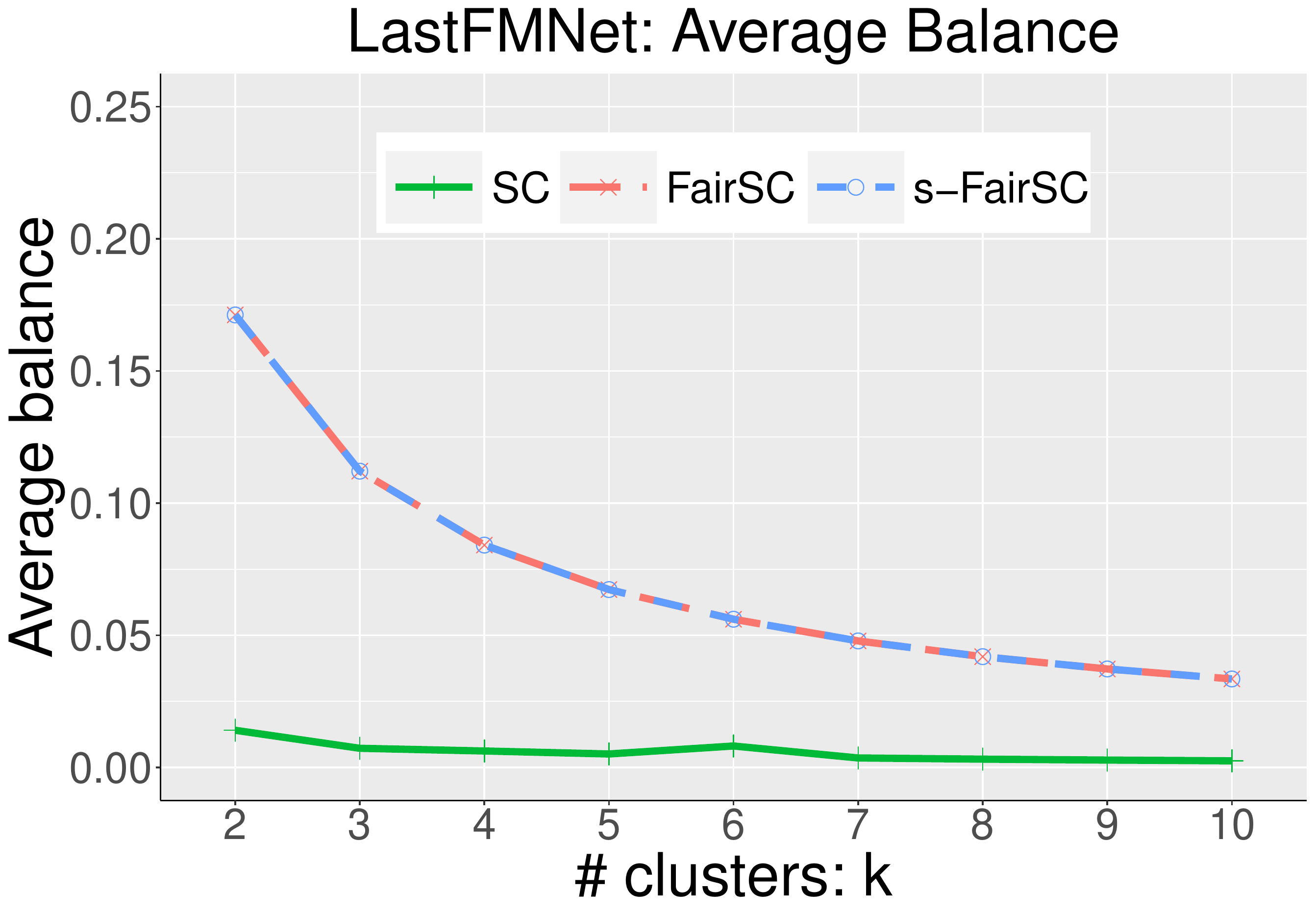} 
\end{center} 
\caption{
Average\_Balance of SC, FairSC, and s-FairSC 
on LastFMNet
as a function of 
the number $k$ of clusters.
}
\label{fig:fm-bal}
\end{figure}

\paragraph{Experiment 4.} \label{exp:4}
In this experiment, we use random Laplacian to demonstrate that 
s-FairSC has a similar scalability as SC.
%from the perspective of
%numerical linear algebra in solving the trace minimization 
%problems~\eqref{eq:normal-prob-fair} and ~\eqref{eq:normal-prob2} with or without linear constraints.
Figure~\ref{fig:time_lapla} reports the running time of SC (Algorithm~\ref{code:algo1}) for solving the SC model \eqref{eq:normal-prob1}
and the s-FairSC  
(Algorithm~\ref{code:algo3}) for solving the fair SC model~\eqref{eq:normal-prob-fair}. 
Here, the model sizes range from $5000$ to $10000$ with the number of groups $h =5$ and
different numbers of clusters $k = 5, 8, 10$. We observe that s-FairSC is only slightly 
more expensive than SC and is as scalable as SC. 

\begin{figure}[ht!]
\begin{center} 
\includegraphics[width=0.36\textwidth]{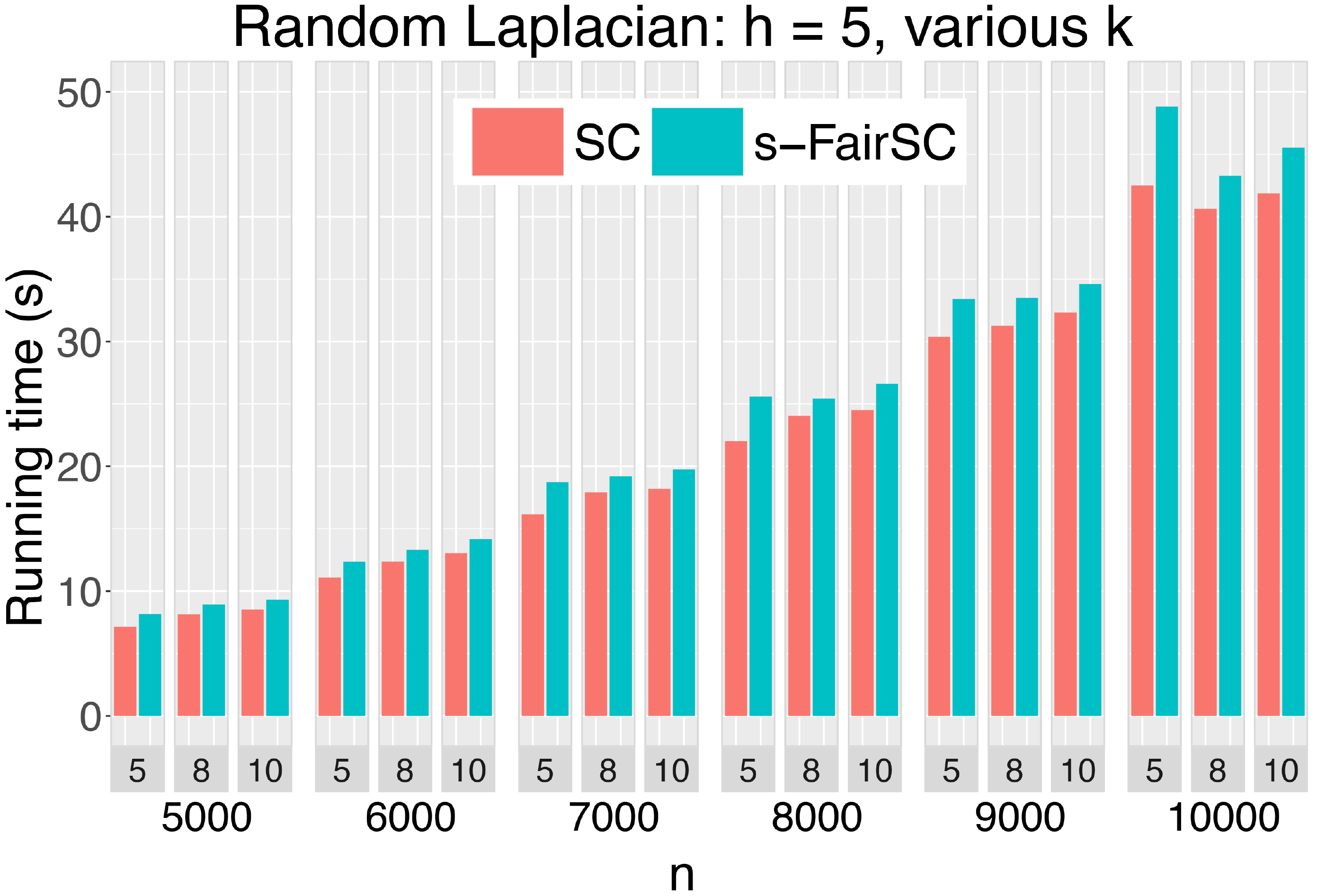}
\end{center} 
\caption{Running time (in seconds) of SC and s-FairSC on random Laplacian
with $h = 5$ and $k \in \{5, 8, 10\}$. 
}
\label{fig:time_lapla}
\end{figure} 

\section{CONCLUDING REMARKS} \label{sec:concl}

FairSC (Algorithm~\ref{code:algo2})
is able to recover fairer clustering, but sacrifices the performance and scalability.
In this paper, we presented a scalable FairSC (s-FairSC, Algorithm~\ref{code:algo2}) by incorporating nullspace projection and Hotelling's deflation. All computational kernels of s-FairSC only involve the sparse 
matrix-vector products, so the algorithm can fully exploit the sparsity 
of the fair SC model~\eqref{eq:normal-prob-fair} and is scalable in the sense that it only 
has a marginal increase in computational costs compared to SC without fairness constraints.

We note that the non-overlap of groups leads to the full rank
of the group indicator matrix $H$, which simplifies the rest of
presentation substantially. 
An interesting extension to the current work is to incorporate group overlapping. 
The overlap of groups may lead to the rank deficiency of $F$. 
In this case, a simple approach is to perform rank-revealing factorization of $F$ 
first, and the rest of the discussion will hold. However, 
it is a subject of further study on how to avoid the rank-revealing factorization, 
and maintain the sparsity of $F$ in computation. Another intriguing problem is the development of scalable algorithms to solve 
the group fairness condition~\eqref{eq:fair} in a less stringent manner. For instance, 
the following notion of group fairness clustering with lower and upper bounds is introduced in~\cite{NEURIPS2019_fc192b0c}:
\begin{equation} \label{eq:bera} 
 \beta_{s} \leq \frac{|V_{s}\cap C_{\ell}|}{|C_{\ell}|} \leq \alpha_{s}
\quad \mbox{for $s = 1, 2, \cdots, h$},
\end{equation}
where $\beta_{s}$ and $\alpha_{s}$ are lower and upper 
bounds for group $V_s$, respectively, and
$0 < \beta_{s} \leq \alpha_{s} < 1$.
A further topic is to extend the s-FairSC  
to individual fairness, where any two individuals who are similar with respect 
to a specific sensitive attribute should be treated 
similarly~\citep{dwork2012fairness,zemel2013learning}. 
An SC model with
individual fairness constraints is devised in \cite{gupta2022consistency}. 
However, the existing algorithm~\citep{gupta2022consistency} is not scalable due to the high costs of its computational kernels.

\subsubsection*{Acknowledgements}
Wang and Bai was supported in part
by NSF grant 1913364. Lu was supported by NSF grant 2110731. Davidson was supported in part by NSF grant 1910306, and a gift from Google.
We would like to thank anonymous reviewers for their constructive comments that have significantly improved the presentation.

\bibliography{ref}

% If you have textual supplementary material
\appendix
\onecolumn

\section{MISSING PROOFS}

In this section, we provide proofs that are missing in the main manuscript.

\subsection{Proof of Lemma 2.1} \label{appx-le2-1}
For~\eqref{i:le:1:1}:
recall that each row of $G$ contains exactly one nonzero entry, and it equals $1$.
Hence, 
	\begin{equation}\label{eq:g1}
		G {\bf 1}_h = {\bf 1}_n
		\quad\text{and}\quad
		{\bf 1}_n^T G {\bf 1}_h = n.
	\end{equation}
Since $G$ has orthogonal columns, the first equation above implies 
	\begin{equation}\label{eq:g2}
		{\bf 1}_h = G^{\dag} {\bf 1}_n,
	\end{equation}
where $G^{\dag}:=(G^TG)^{-1}G^T$ denotes the pseudo inverse of $G$.
For $\rank(F_0)=h-1$, it is sufficient to show that the nullspace 
of $F_0$ is of dimension one.
Let $x\in\mathbb R^p$ be a null vector of $F_0$, i.e., $F_0x = 0$.
By the definition of $F_0$, we have 
\[
Gx= \alpha \cdot {\bf 1}_n \quad\text{with}\quad \alpha:=({\bf 1}_n^TG x)/n.
\]
A multiplication of $G^{\dag}$ to the equation, together
with~\eqref{eq:g2}, leads to $x = \alpha {\bf 1}_h$.
On the other hand, ${\bf 1}_h$ is a null vector of $F_0$:
\[
F_0\cdot {\bf 1}_h = G {\bf 1}_h - {\bf 1}_n ({\bf 1}_n^TG {\bf
1}_h)/n = 0,
\]
where we used~\eqref{eq:g1}. 
Consequently, the nullspace of $F_0$ is spanned by 
the vector ${\bf 1}_h$. By the rank-nullity theorem in linear algebra, $\rank(F_0) = h-1$.
It follows from $F_0\cdot {\bf 1}_h=0$ that the 
last column of $F_0$ is a linear
combination of  the first $h-1$ columns.
Consequently, we have $\rank(F_0) = \rank(F)$.

For~\eqref{i:le:1:2}:
it follows from~\eqref{i:le:1:1} that $F_0$ and $F$ have the same
range space. Hence, $F_0^Ty=0$ if and only if $F^Ty=0$.
Then~\eqref{i:le:1:2} follows from~\eqref{eq:f0h}. 

\subsection{Proof of Proposition 3.1} \label{appx:prop3-1}
We just need to verify that~\eqref{eq:blockAp} is an eigenvalue decomposition of the matrix $L^{\rm p}_{\rm n}$. 
	First, since $V$ is a basis of the nullspace of $C^T$ and $U_2$ is a
	basis of the range of $C$, we have
 \[ 
 V^TU_2=0 
 \quad \mbox{and} \quad  
		U_1^TU_2 = Y^T(V^T U_2) = 0.
	\]
	A quick verification shows $U=[U_1,U_2]$ satisfy 
	$U^TU=I_n$. Therefore $U$ is orthogonal.

	On the other hand, it follows from 
    $L^{\rm v}_{\rm n} = V^T L_{\rm n} V$ and $L^{\rm p}_{\rm n} = PL_{\rm n}P$ that
	\[
		L^{\rm p}_{\rm n} = V L^{\rm v}_{\rm n} V^T. 
	\]
	Hence, 
	\[
		 L^{\rm p}_{\rm n}\, U_1 = (V L^{\rm v}_{\rm n} V^T) (VY) 
		 = V(L^{\rm v}_{\rm n} Y) = V (Y \Lambda_{\rm v}) = U_1\Lambda_{\rm v},
	\]
	where we used~\eqref{eq:eiglv} in the third equality. 
	On the other hand, $V^TU_2=0$  implies
	\[
		L^{\rm p}_{\rm n}\, U_2 = VL^{\rm v}_{\rm n}V^TU_2 = 0.
	\]
	Consequently, $L^{\rm p}_{\rm n} [U_1,U_2] = [U_1,U_2]\cdot
	\mbox{blockdiag}(\Lambda_{\rm v},
	{\bf 0}_{h-1,h-1})$, i.e., the eigenvalue decomposition ~\eqref{eq:blockAp} of $L^{\rm p}_{\rm n}$.

\subsection{Proof of Proposition 3.2} \label{appx:prop3-2}
    It follows from~\eqref{eq:meig} that 
	$A Q_1= Q_1\Lambda_1$ and $AQ_2= Q_2\Lambda_2$.
	Consequently, 
	$A_\sigma Q_1 = AQ_1+\sigma Q_1
	= Q_1(\Lambda_1+ \sigma I)$
	and 
	$A_\sigma Q_2 = AQ_2 = Q_2\Lambda_2$.

\section{ADDITIONAL EXPERIMENTS AND DISCUSSIONS}

\subsection{m-SBM Dataset} \label{appx:msbm}
In this section, we first describe the standard stochastic block model (SBM). Then, we discuss a modification to accommodate group fairness.

\subsubsection{SBM}
In an SBM with $n$ vertices and $k$ blocks, each vertex is assigned to one block ({\it a cluster}) to prescribe a clustering, and edges are placed between vertex pairs with probabilities 
dependent only on the block membership of the vertices.

Following \citep{lei2015consistency},
to generate a random graph $\mathcal{G}(V, W)$ 
with a ground-truth clustering  $V = C_1 \cup \cdots \cup C_k$ by an SBM, 
we need a pair of parameters $(u, P)$. The vector
$u = [u_1,\, u_2,\, \ldots, u_k] \in \mathbb{N}^k$ 
stores the sizes of each block,
i.e., each element $u_i$ denotes the number of vertices in $C_i$,
so that $\sum^{k}_{i=1} u_i = n$, where $n = |V|$. Once $u$ is assigned, we can represent the ground-truth clustering
by an indicator matrix $H \in \{0,1\}^{n \times k}$
as  defined in~\eqref{eq:H}.
$P$ is a symmetric probability matrix $P = (p_{ij}) \in \mathbb{R}^{n \times n}$ defining the edge connectivity with 
\begin{equation} \label{eq:p-tsbm}
	p_{ij} = \left\{
    \begin{array}{ll}
          a, & \mbox{if $v_i$ and $v_j$ are in the same cluster}, \\
          b, & \mbox{if $v_i$ and $v_j$ are in different clusters}.
     \end{array}
     \right.
\end{equation}
We require $a > b$ so that two vertices within a same cluster have a 
higher chance to be joined by an edge than between clusters.

%Once $(u, P)$ is set, 
%let $\alpha$ be the weight of within-cluster edges and $\beta$ be the weight of between-cluster edges, and $\alpha > \beta$, then  

Next, let $\alpha$ and $\beta$ be the weights for within-cluster and 
between-cluster edges, respectively. 
Then  the weighted adjacency matrix $W = (w_{ij}) \in \{0,\alpha,\beta\}^{n \times n}$ of graph $\mathcal{G}$
is generated by 
\begin{equation} \label{eq:w-tsbm}
w_{ij} = \left\{
\begin{array}{ll}
\mbox{Bernoulli}(p_{ij}), & \mbox{if $i \neq j$}, \\
0, & \mbox{if $i = j$},
\end{array}
\right.
\end{equation}
where $\mbox{Bernoulli}(p_{ij})$ is a random variable satisfying the Bernoulli distribution with probability $p_{ij}$ such that 
\begin{equation} \label{eq:sbm-bernoulli}
\left\{
\begin{array}{ll}
P_r(w_{ij} = \alpha) = p_{ij} = 1 - P_r(w_{ij} = 0), \quad \mbox{if $v_i$ and $v_j$ are in the same cluster}, \\
P_r(w_{ij} = \beta) = p_{ij} = 1 - P_r(w_{ij} = 0), \quad \mbox{if $v_i$ and $v_j$ are in different clusters}. 
\end{array}
\right.
\end{equation}
The SBM graph is then given by $\mathcal{G}(V, W)$.

\begin{example} \label{eg:sbm}
Let $k = 3, n = 6$, we set vector $u = [u_1, u_2, u_3] = [2, 2, 2]$, parameters $\alpha =3, \beta = 1$ for the edge weight, and parameters $a = 0.6, b = 0.2$ for the probability matrix $P$. 
% \Red{Question: why don't require $a+b = 1$?}
    First, we define a ground-truth clustering from $u$ using the cluster indicator matrix $H$ as follows
\begin{align*}
    H &= \begin{bmatrix}
        1 & 1 & 0 & 0 & 0 & 0 \\
        0 & 0 & 1 & 1 & 0 & 0 \\
        0 & 0 & 0 & 0 & 1 & 1 \\
    \end{bmatrix}^{T}.
\end{align*}
Based on $H$, we then use~\eqref{eq:p-tsbm}, ~\eqref{eq:w-tsbm}
and~\eqref{eq:sbm-bernoulli}
to generate the adjacency matrix $W$ with given $\alpha, \beta$, and $a, b$. 
Figure~\ref{fig:SBM-example} depicts the SBM $\mathcal{G}(V, W)$ and the underlying ground-truth clustering $V = C_1 \cup C_2 \cup C_3$. $\Box$

\begin{figure}[ht!]
    \begin{center} 
    \includegraphics[width=0.3\textwidth]{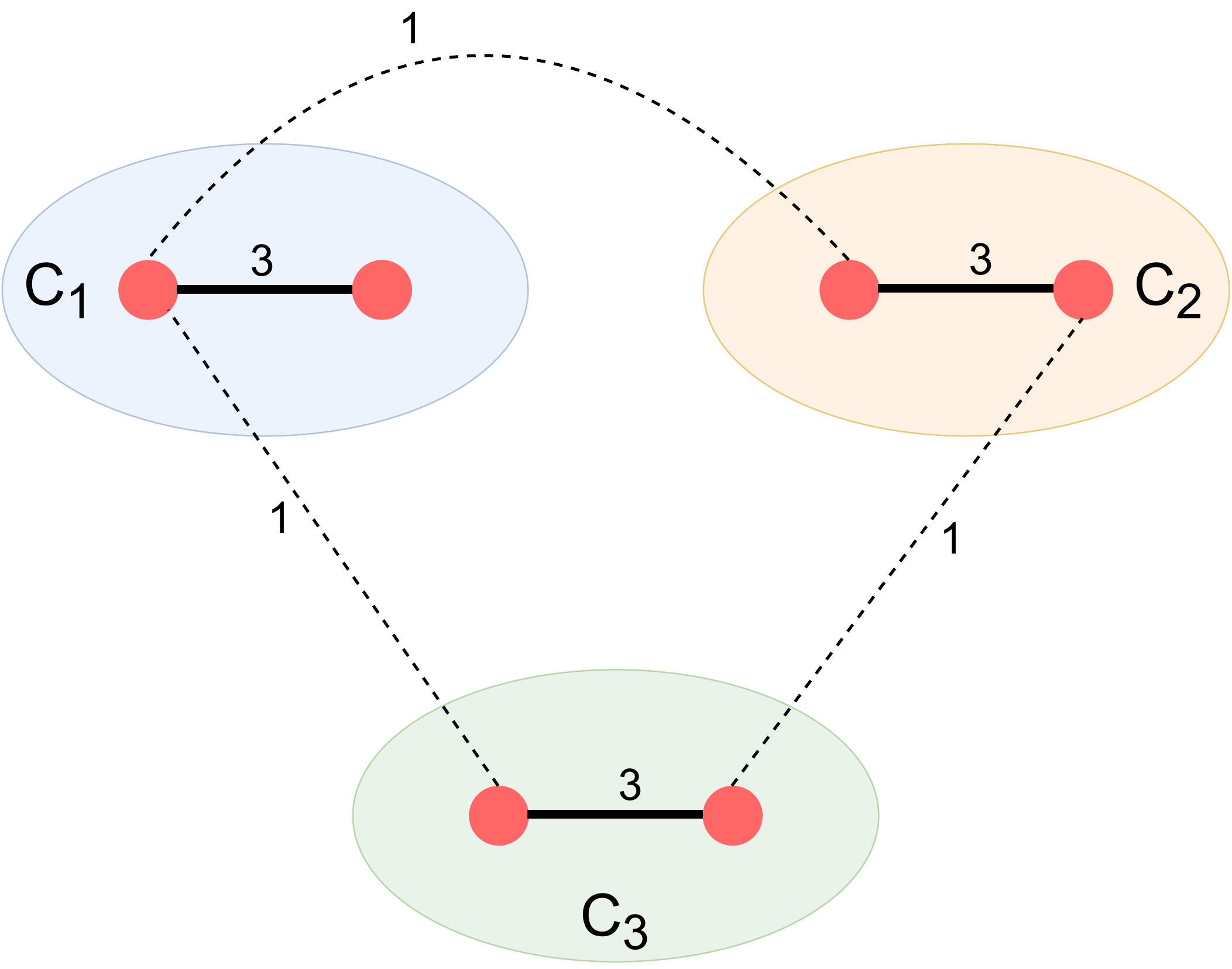} 
    \end{center} 
    \caption{
    The SBM $\mathcal{G}(V, W)$ and the ground-truth clustering $V = C_1 \cup C_2 \cup C_3$ (Example~\ref{eg:sbm}).
    }
\label{fig:SBM-example}
\end{figure}
\end{example}

\subsubsection{m-SBM}
To take group fairness into account, 
we utilize a modified SBM (m-SBM) proposed 
in~\cite{kleindessner2019guarantees}. 
In m-SBM, $n$ vertices are partitioned into $h$ disjoint groups such that
$V = V_{1}\cup \cdots\cup V_{h}$, and 
are assigned to $k$ clusters 
for a prescribed fair ground-truth clustering
$V = C_1 \cup \cdots \cup C_k$. 
An edge between a pair of vertices 
is placed  with probabilities dependent only on 
whether the terminal vertices belong to the same cluster or group.

Specifically, let us first define $u = [u_1,\, u_2,\, \ldots, u_k]$ 
as the block vector consisting of $k$ blocks. 
Each block $u_i \in \mathbb{N}^h$ contains $h$ elements,
and
each element $u_{i}^{(j)}$ of $u_i$ equals the number of vertices in $V_j \cap C_i$, i.e.,
\[
u_{i}^{(j)} = |V_j \cap C_i|.
\]
Consequently, we have 
\[
\sum_{i=1}^{k}u_{i}^{(j)} = |V_{j}|, \quad
\sum_{j=1}^{h}u_{i}^{(j)} = |C_{i}|, 
\quad \mbox{and} \quad
\sum^{k}_{i=1} \sum^{h}_{j=1}u^{(j)}_i = |V|.
\]
To satisfy the fairness condition~\eqref{eq:fair},  
for $j = 1, 2, \cdots, h$, the elements of the vector $u$ should be chosen such that 
\begin{equation} \label{eq:fair-block}
\frac{u_{i}^{(j)}}{\sum_{j=1}^{h}{u_{i}^{(j)}}} 
= \frac{\sum_{i = 1}^{k}{u_{i}^{(j)}}}{|V|}
\quad \mbox{for any } i = 1,2,\cdots, k.
\end{equation}
Once $u$ is set, we have
\begin{itemize}
\item Group-membership vectors $g^{(s)}$, for $s = 1, 2, \cdots, h$,
as  defined in~\eqref{eq:group-vec}, 
and the corresponding group-membership matrix $F$ as defined in Lemma~\ref{le:1};

\item  Clustering indicator matrix $H \in \{0,1\}^{n \times k}$ 
as  defined in~\eqref{eq:H}, containing the fair ground-truth clustering. 
\end{itemize}
The probability matrix 
$P = (p_{ij}) \in \mathbb{R}^{n \times n}$ 
for edge connectivity is defined as follows:
\begin{equation} \label{eq:p-msbm}
p_{ij} = \\
\left\{
\begin{array}{ll}
a, & \mbox{if $v_i, v_j$ are in the same cluster and group}, \\
b, & \mbox{if $v_i, v_j$ are in different clusters but the same group}, \\
c, & \mbox{if $v_i, v_j$ are in the same cluster but different groups}, \\
d, & \mbox{if $v_i, v_j$ are in different clusters and groups},
\end{array}
\right.
\end{equation}
where $a > b > c > d$. 
Let $\alpha$ be the weight of within-cluster edges and $\beta$ be the weight of between-cluster edges and $\alpha > \beta$, then
the adjacency matrix of the m-SBM $\mathcal{G}(V, W)$ is given by 
$W = (w_{ij}) \in \{0,\alpha,\beta\}^{n \times n}$
as follows:
    \begin{equation} \label{eq:w-msbm}
    w_{ij} = \left\{
    \begin{array}{ll}
          \mbox{Bernoulli}(p_{ij}), & \mbox{if $i \neq j$}, \\
          0, & \mbox{if $i = j$},
     \end{array}
     \right.
    \end{equation}
    where $\mbox{Bernoulli}(p_{ij})$ is a random variable satisfying the Bernoulli distribution 
    % of $w_{ij}$ 
    with probability $p_{ij}$ such that 
    \begin{equation} \label{eq:msbm-bernoulli}
    \left\{
    \begin{array}{ll}
    P_r(w_{ij} = \alpha) = p_{ij} = 1 - P_r(w_{ij} = 0), 
    \quad \mbox{if $v_i$ and $v_j$ are in the same cluster}, \\
    P_r(w_{ij} = \beta) = p_{ij} = 1 - P_r(w_{ij} = 0), \quad
    \mbox{if $v_i$ and $v_j$ are in different clusters}. 
    \end{array}
    \right.
    \end{equation}

\begin{example} \label{eg:msbm}
Let $k = 3$, $h = 2$ and $n = 10$, we set the block vector $u = [u_1, u_2, u_3] = [(2, 2), (2, 2), (1, 1)]$, parameters $\alpha =3$
and $\beta = 1$ for the edge weight, and parameters $a = 0.6, b = 0.4, c = 0.2, d = 0.1$ for the probability matrix $P$. 
By the vector $u$, we have the following group membership vectors
$g^{(s)}$ and the indicator matrix $H$ of the fair ground-truth clustering 
\begin{align*}
    g^{(1)} &= \begin{bmatrix}
        1 & 1 & 0 & 0 & 
        1 & 1 & 0 & 0 & 
        1 & 0
    \end{bmatrix}^{T}, \\
    g^{(2)} &= \begin{bmatrix}
        0 & 0 & 1 & 1 & 
        0 & 0 & 1 & 1 & 
        0 & 1
    \end{bmatrix}^{T}, \\
    H &= \begin{bmatrix}
        1 & 1 & 1 & 1 & 0 & 0 & 0 & 0 & 0 & 0 \\
        0 & 0 & 0 & 0 & 1 & 1 & 1 & 1 & 0 & 0 \\
        0 & 0 & 0 & 0 & 0 & 0 & 0 & 0 & 1 & 1 \\
    \end{bmatrix}^{T}.
\end{align*}
By~\eqref{eq:p-msbm},~\eqref{eq:w-msbm} and~\eqref{eq:msbm-bernoulli}, with given weights $\alpha, \beta$, probabilities $a, b, c, d$, group information $g^{(1)}, g^{(2)}$, and the cluster information in $H$,
we can generate the adjacency matrix $W$. 
The m-SBM $\mathcal{G}(V, W)$ is shown in Figure~\ref{fig:MSBM-example}.  $\Box$

\begin{figure}[ht!]
    \begin{center} 
    \includegraphics[width=0.35\textwidth]{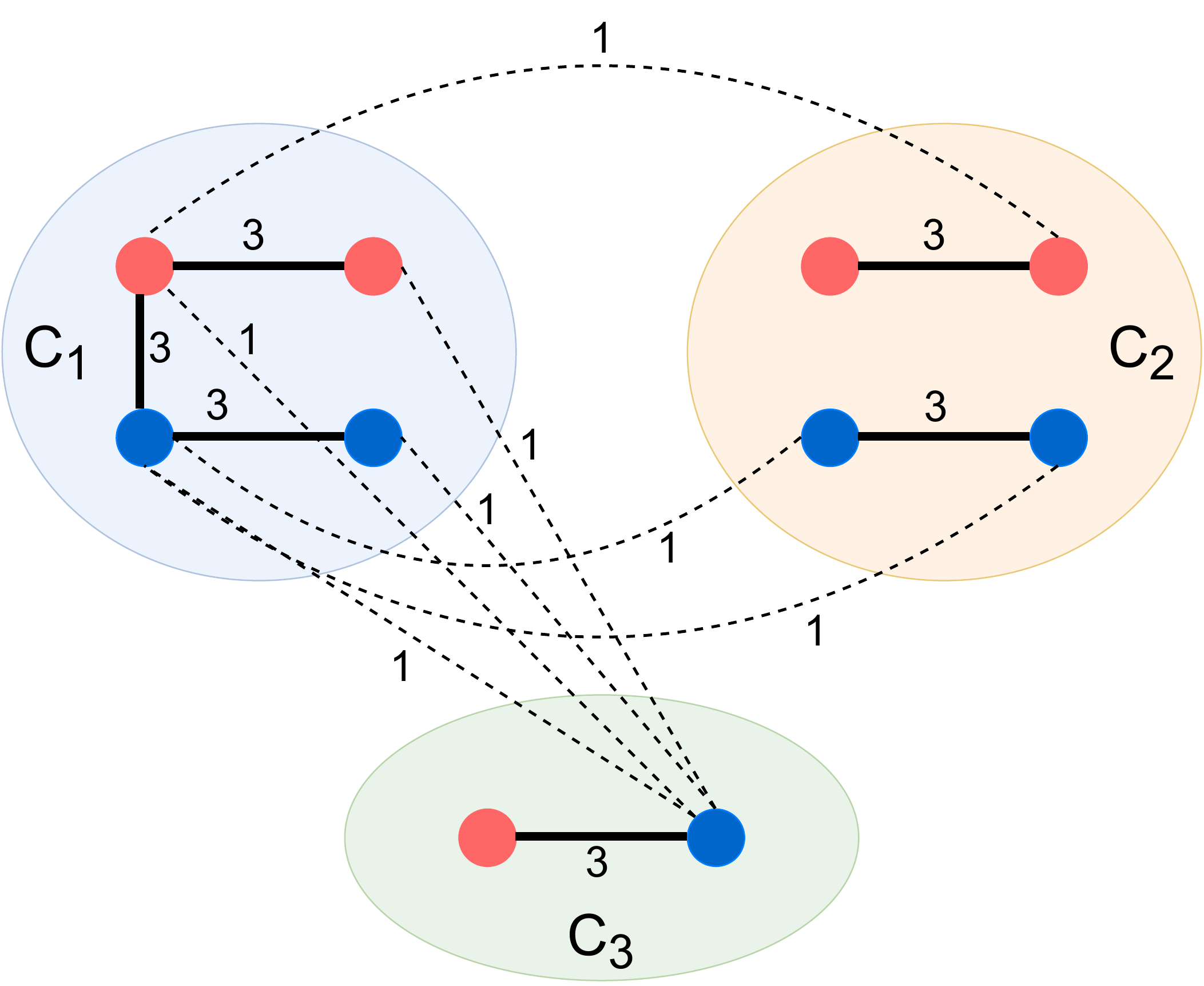} 
    \end{center} 
    \caption{
    An m-SBM $\mathcal{G}(V, W)$ and fair ground-truth clustering $V = C_1 \cup C_2 \cup C_3$ with respect to the group partition $V = V_1 \cup V_2$, where $V_1$ is the set of red vertices and $V_2$ is the set of blue vertices (Example~\ref{eg:msbm}).
    }
\label{fig:MSBM-example}
\end{figure}
\end{example}

\subsection{Fairness Measured by Balance} \label{appx:bal}
For the metric of balance discussed in Section~\ref{sec:ex-results}, we claimed ``a higher balance implies a fairer clustering''.
Here we give a brief justification.
First, let us recall the definitions of balance and average balance.
\begin{definition}
Given a clustering $V = C_{1}\cup\cdots\cup C_{k}$ and a non-overlapping group partition $V = V_1 \cup \cdots \cup V_h$, 
the {\em balance} of cluster $C_{\ell}$ 
for $\ell = 1, 2, \cdots, k$ is defined as  
\begin{equation} \tag{\ref{eq:bal}}
    \mbox{Balance}(C_{\ell}) := \min_{s \neq s' \in \{1, \cdots, h\}} \frac{|V_{s}\cap C_{\ell}|}{|V_{s'}\cap C_{\ell}|} .
\end{equation}
The {\em average balance}  is defined as 
\begin{equation} \tag{\ref{eq:avebal}}
\mbox{Average\_Balance} = \frac{1}{k}\sum_{\ell=1}^{k} \text{Balance}(C_{\ell}). 
\end{equation} 
\end{definition} 

It follows from~\cite{kleindessner2019guarantees} 
that for any clustering, we have 
\begin{equation} \label{eq:bal-upper}
\min_{\ell \in \{1, \cdots, k\}} \mbox{Balance}(C_{\ell}) \leq 
\min_{s \neq s' \in \{1, \cdots, h\}} \frac{|V_s|}{|V_{s'}|}.
\end{equation}
Assume a clustering reaches the upper bound \eqref{eq:bal-upper}, i.e., 
\begin{equation} \label{eq:gfair-equiv}
\frac{|V_{s}\cap C_{\ell}|}{|V_{s'}\cap C_{\ell}|}
= \frac{|V_s|}{|V_{s'}|},
\end{equation} 
for $\ell = 1, \ldots, k$ and 
$s,s' = 1, \ldots, h$ and $s \neq s'$.
Then we will have the group fairness as defined in \eqref{eq:fair}, i.e.,
\begin{equation}
\frac{|V_{s}\cap C_{\ell}|}{|C_{\ell}|} 
= \frac{|V_{s}|}{|V|},
\end{equation} 
for $\ell = 1, \ldots, k$ and $s = 1, \ldots, h$. 
To justify the equation above, 
 we derive from \eqref{eq:gfair-equiv} that
\begin{equation}
    |V_{s}\cap C_{\ell}| \cdot |V_{s'}| = |V_{s}| \cdot |V_{s'}\cap C_{\ell}|.
\end{equation}
Therefore, 
\[
    \sum_{s \neq s' \in \{1, \dots, h\}} |V_{s}\cap C_{\ell}| \cdot |V_{s'}| = 
    \sum_{s \neq s' \in \{1, \dots, h\}} |V_{s}| \cdot |V_{s'}\cap C_{\ell}|. 
\]
Consequently, 
\[
(|C_\ell| - |V_{s'}\cap C_{\ell}|) |V_{s'}| = (|V| - |V_{s'}|) |V_{s'}\cap C_{\ell}|,
\]
or equivalently
\[
\frac {|V_{s'}\cap C_{\ell}|} {|C_{\ell}|}
= \frac{|V_{s'}|}{|V|}.
\]
Multiplication of the above equation to \eqref{eq:gfair-equiv} 
leads directly to the group fairness \eqref{eq:fair}:
\[
\frac{|V_{s}\cap C_{\ell}|}{|C_{\ell}|} 
= \frac{|V_{s}|}{|V|}.
\]
Combining~\eqref{eq:bal}, ~\eqref{eq:bal-upper}, and~\eqref{eq:gfair-equiv}, 
we can conclude that a higher value of Average\_Balance indicates a fairer clustering.

%Using \eqref{eq:gfair-equiv} again, we 
%have
%\[
%\frac{|C_{\ell}|} {|V_{s'}\cap C_{\ell}|} \cdot \frac{|V_{s'}\cap C_{\ell}|}{|V_{s}\cap C_{\ell}|}
%= \frac{|V|}{|V_{s'}|} \cdot 
%\frac{|V_{s'}|}{|V_{s}|},
%\]
%i.e., we have the group fairness \eqref{eq:fair}:
%\[
%\frac{|V_{s}\cap C_{\ell}|}{|C_{\ell}|} 
%= \frac{|V_{s}|}{|V|}.
%\]
%Combining~\eqref{eq:bal}, ~\eqref{eq:bal-upper}, and~\eqref{eq:gfair-equiv}, 
%we can conclude that a higher value of Average\_Balance indicates a fairer clustering.
%
\subsection{Hyperparameter Tuning}
s-FairSC (Algorithm~\ref{code:algo3}) takes a few parameters as input, namely, parameter $k$ (the number of clusters), parameter $h$ (the number of groups), and shift parameter $\sigma$ for Hotelling’s deflation. In this section, we illustrate the effect of these parameters on the performance of our algorithm.

\paragraph{Parameter $k$.} 
In Experiment 4 on Random Laplacian dataset, we report the performance with respect to the parameter $k$; see Figure~\ref{fig:time_lapla}. 

\paragraph{Parameter $h$.}
We provide additional experiments on the performance with respect to the parameter $h$.
Recall that the number of groups $h$ is associated with the number of linear constraints in the fair SC model~\eqref{eq:normal-prob-fair}.
We perform experiments with fixed probabilities $a, b, c, d$ and number of clusters $k$, but different group numbers $h$.
Figure~\ref{fig:SBM-h} depicts 
the error rates and running time of SC, FairSC, and s-FairSC. 
SC and s-FairSC are tested for model sizes from $n = 1000$ to $10000$. However, 
FairSC stops at $n = 4000$ due to its high computational cost, echoing results reported in~\cite{kleindessner2019guarantees}.

From Figure~\ref{fig:benchmarkSBM}, we observe that SC has high error rates and fails to recover the fair ground-truth clustering, 
while both FairSC  
and s-FairSC are able to retrieve the
fair ground-truth clustering and s-FairSC is as accurate as FairSC. 
However, the running time of s-FairSC 
is only a fraction of that of FairSC.
For instance, when $n = 4000$, s-FairSC is about 10$\times$ to 12$\times$ faster than FairSC.
We also observe that s-FairSC is as scalable as SC without fairness constraints. Noticeably, we observe that when $h=10$, s-FairSC is even faster than SC. For example, when $n = 10000$, s-FairSC takes only 62\% time of SC. We anticipate that it is due to 
the factors such as the sparsity distribution of the random graph $\mathcal{G}(V,W)$, and
the eigenvalue distribution of $L^{\sigma}_{\rm n}$ in~\eqref{eq:asigma}.
This issue is subject to further investigation.

 \begin{figure}[ht!]
        \begin{center} 
        {\includegraphics[width=0.6\textwidth]{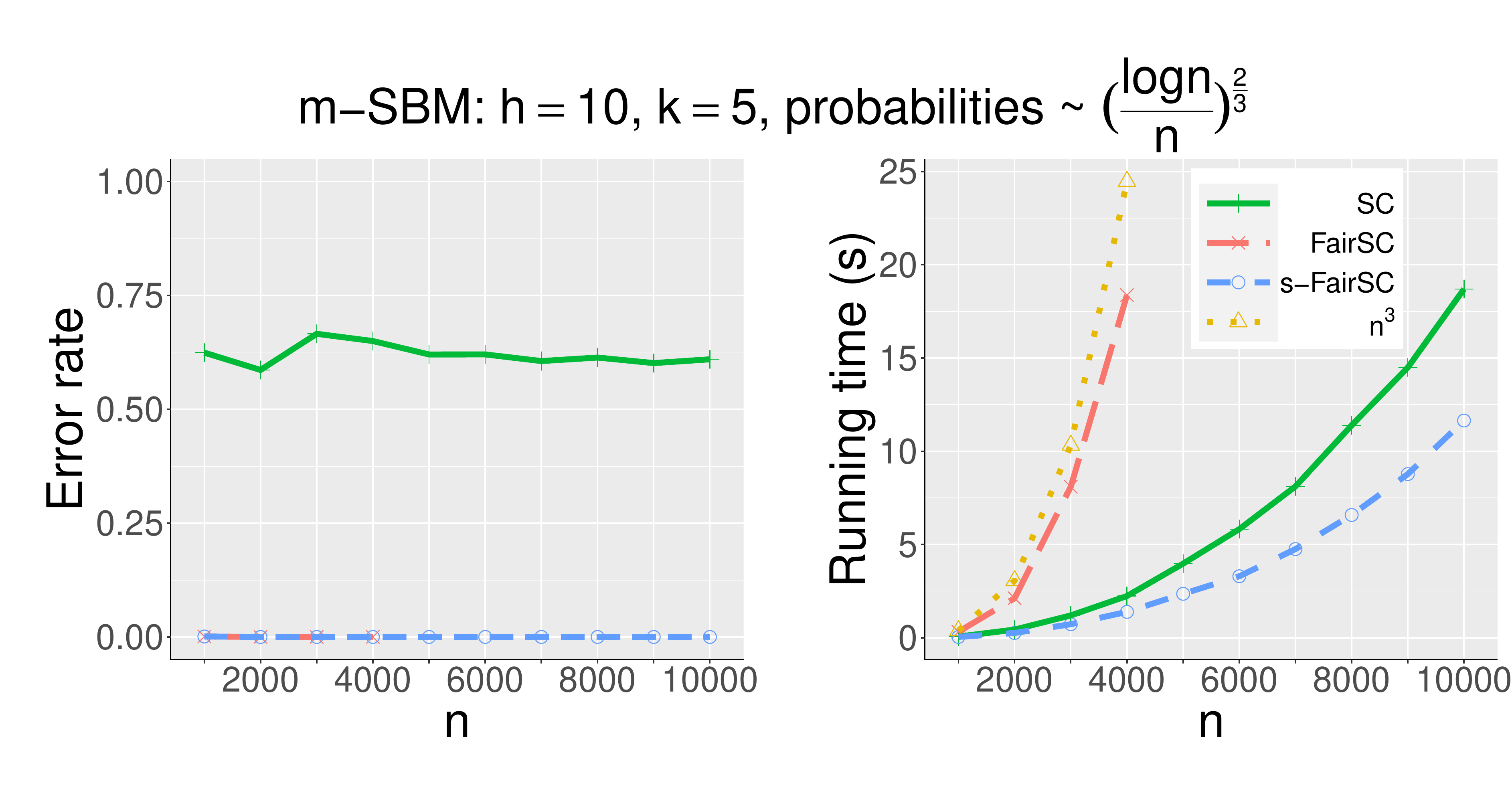}  \\
        \includegraphics[width=0.6\textwidth]{fig/SBM/MSBM_err_time.pdf} \\
        \includegraphics[width=0.6\textwidth]{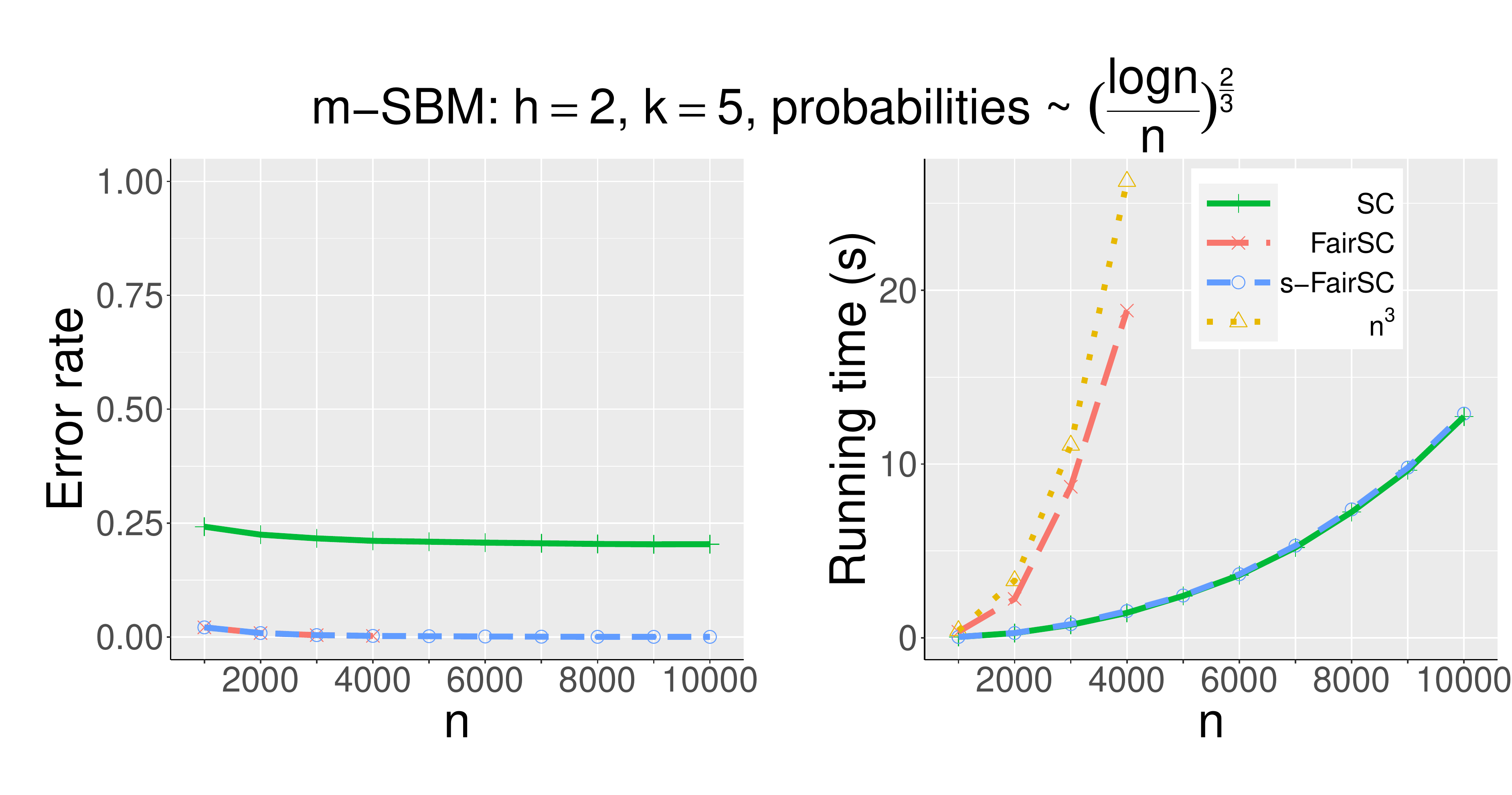}  \\
        }
        \end{center} 
        \caption{Error rates and running time (in seconds) of SC, FairSC, and s-FairSC on m-SBM
    with $h \in \{2,5,10\}$, edge connectivity probabilities proportional 
    to $(\frac{\log n}{n})^\frac{2}{3}$,
    specifically $a = 10\times (\frac{\log n}{n})^{2/3}, 
    b = 7\times (\frac{\log n}{n})^{2/3}, 
    c = 4\times (\frac{\log n}{n})^{2/3}, 
    d = (\frac{\log n}{n})^{2/3}$.
   }
\label{fig:SBM-h}
\end{figure}

\paragraph{Parameter $\sigma$.}
In our implementation, we use $\norm{L_{\rm n}}_1$ as shift $\sigma$. Theoretical justification for the choice of the shift can be found in \cite{Lin:2021}.  Given the equivalence of matrix norms, there is no significant difference with respect to the choice of the matrix norm. 

\end{document}